\DeclareMathOperator{\grad}{\nabla}
\newlength\myindent
\newcommand\bindent{%
  \begingroup
  \setlength{\itemindent}{0.25cm}
}
\newcommand\eindent{\endgroup}
\begin{document}

\title{Conducting Credit Assignment by Aligning Local Distributed Representations}

\author{\name Alexander G. Ororbia \email ago109@psu.edu \\
       \addr College of Information \& Sciences Technology\\
       Penn State University\\
       University Park, PA 16802, USA
       \AND
       \name Ankur Mali \email aam35@psu.edu \\
       \addr College of Information \& Sciences Technology\\
       Penn State University\\
       University Park, PA 16802, USA
       \AND
       \name Daniel Kifer \email dkifer@cse.psu.edu \\
       \addr School of Electrical Engineering and Computer Science\\
       Penn State University\\
       University Park, PA 16802, USA
       \AND
       \name C. Lee Giles \email giles@ist.psu.edu \\
       \addr College of Information \& Sciences Technology\\
       Penn State University\\
       University Park, PA 16802, USA}

\editor{}

\maketitle

\begin{abstract}
Using back-propagation and its variants to train deep networks is often problematic for new users. Issues such as exploding gradients, vanishing gradients, and high sensitivity to weight initialization strategies often make networks difficult to train, especially when users are experimenting with new architectures.
Here, we present Local Representation Alignment (LRA), a training procedure that is much less sensitive to bad initializations, does not require modifications to the network architecture, and can be adapted to networks with highly nonlinear and discrete-valued activation functions. Furthermore, we show that one variation of LRA can start with a null initialization of network weights and still successfully train networks with a wide variety of nonlinearities, including tanh, ReLU-6, softplus, signum and others that may draw their inspiration from biology.

A comprehensive set of experiments on MNIST and the much harder Fashion MNIST data sets show that LRA can be used to train networks robustly and effectively, succeeding even when back-propagation fails and outperforming other alternative learning algorithms, such as target propagation and feedback alignment.
\end{abstract}

\begin{keywords}
  learning algorithm, artificial neural networks, credit assignment, representation learning, local learning
\end{keywords}

\section{Introduction}
\label{intro}
In artificial neural networks, credit assignment is the task of computing the contribution to the overall error caused by individual units in the network, and then using this information to update the parameters of the entire network. Credit assignment and updates are most often done with the help of the gradient calculated by the well-known back-propagation of errors \citep{rumelhart1988backprop},\footnote{We will refer to back-propagation of errors also as ``backprop'' and ``back-propagation'' throughout.}  which provides a theoretical basis for training deep networks (i.e. gradient descent) but also presents challenges in practice due to the known vanishing/exploding gradient and shrinking variance problems.

The most common strategies for dealing with such problems include 
\begin{inparaenum}[(1)]
\item a careful initialization of the network parameters, often following from a network-specific analysis of the learning dynamics caused by back-propagation \citep{glorot2010understanding,he2015delving,Sussillo14,MishkinM15}, and
\item modifying the network structure, for example by using ReLU instead of sigmoid activations or adding batch normalization layers \cite{ioffe2015batch}.
\end{inparaenum}
Challenges such as these are often a barrier for new users to training deep networks to accuracies that approach the state of the art.

In this paper, we present a novel training algorithm, called Local Representation Alignment (LRA), that is robust to poor choices of initialization and can train deep networks from initializations that would cause backprop to fail. This allows network designers to choose units, including non-differentiable ones, based on the type of representation they provide rather than on the ideosyncrasies of backprop-based algorithms. 

The idea behind LRA is that every layer, not just the output layer, has a target and each layer's weights are adjusted so that its output moves closer to its target. While this idea is common to prior work such as TargetProp \citep{Miguel14,Bengio14,lee2015targetprop}, one key novelty of LRA is that it chooses targets that are in the possible representation of the associated layers and hence the layer's parameters can be updated more effectively (i.e. layers are not forced to try to match a target that is impossible to achieve). Thus, unlike innovations such as Difference Target Propagation \citep{lee2015targetprop}, Batch Normalization \citep{ioffe2015batch}, etc, LRA does not need to introduce new layers in the architecture. As a result it can be viewed either as an alternative to such approaches, or as a complementary technique because it is compatible with these other methods (i.e. it can be used with batch normalization layers and residual blocks and any other layers that are helpful for the problem-specific representations a deep network needs to acquire).

Our method for setting the targets treats a deep network as a collection of smaller, related subgraphs. This view allows us to flexibly incorporate ideas like feedback alignment \citep{lillicrap2016random} to train deep networks with non-differentiable activations -- specifically, we use the feedback matrix to update the targets rather than the weights. This variant of LRA can also train differentiable networks as fast as back-propagation but more robustly -- it is less sensitive to weight initializations than backprop and even other variants of feedback alignment \citep{lillicrap2016random,nokland2016direct}.

Another interesting feature of LRA is that it dynamically chooses which layers need to be trained - in the beginning, all layers, even the bottom-most layers in the network, receive significant updates; towards the end of training, only the top few layers need to be updated.

In the experiments of this paper, we compare two variations of the LRA  -- one where updates are based on calculus and another where error feedback weights are used (which turns out to be superior in robustness and speed). In our results, we show that LRA is:
\begin{itemize}
	\item robust to initialization when training highly nonlinear, deep networks. This result even holds in the extreme case of zero initialization, which back-propagation and target propagation cannot even handle.
    \item able to avoid the vanishing gradient problem and train deep networks rather independently of the nonlinearity used internally. This means we can train performant models composed of many units such as the classical logistic sigmoid.
    \item able to adapt the amount of computation expended during training. The depth of credit assignment is tied to how well a representation aligns with a target as governed by the local loss function. 
    \item able to learn networks that contain discrete-valued activation functions. 
    \item able to learn networks that contain stochastic units.
    \item easily able to work with biologically inspired mechanisms, such as hard and soft lateral competition among hidden units. Since derivatives are no longer required for these mechanisms, a pathway to integrating other, even non-differentiable neurocognitively-motivated mechanisms is opened up.
\end{itemize}

We first explain how back-propagation can be re-cast in the target propagation framework of \citet{lee2015targetprop} in Section \ref{sec:lra}. This makes it easier to identify weaknesses in the ``backprop as targetprop'' viewpoint and explain the modifications that give rise to LRA. Then we explain how a further modification allows LRA to work with discrete, non-differentiable units, as well as stochastic units. We discuss related work in Section \ref{sec:review} and present experimental results in Section \ref{sec:exp_results} using MNIST and Fashion MNIST, a new and much more challenging benchmark.

\section{Local Representation Alignment}
\label{sec:lra}

\begin{figure*}
\centering     
\subfigure[LR-diff target search.]{\label{fig:lra_target}\includegraphics[width=74mm]{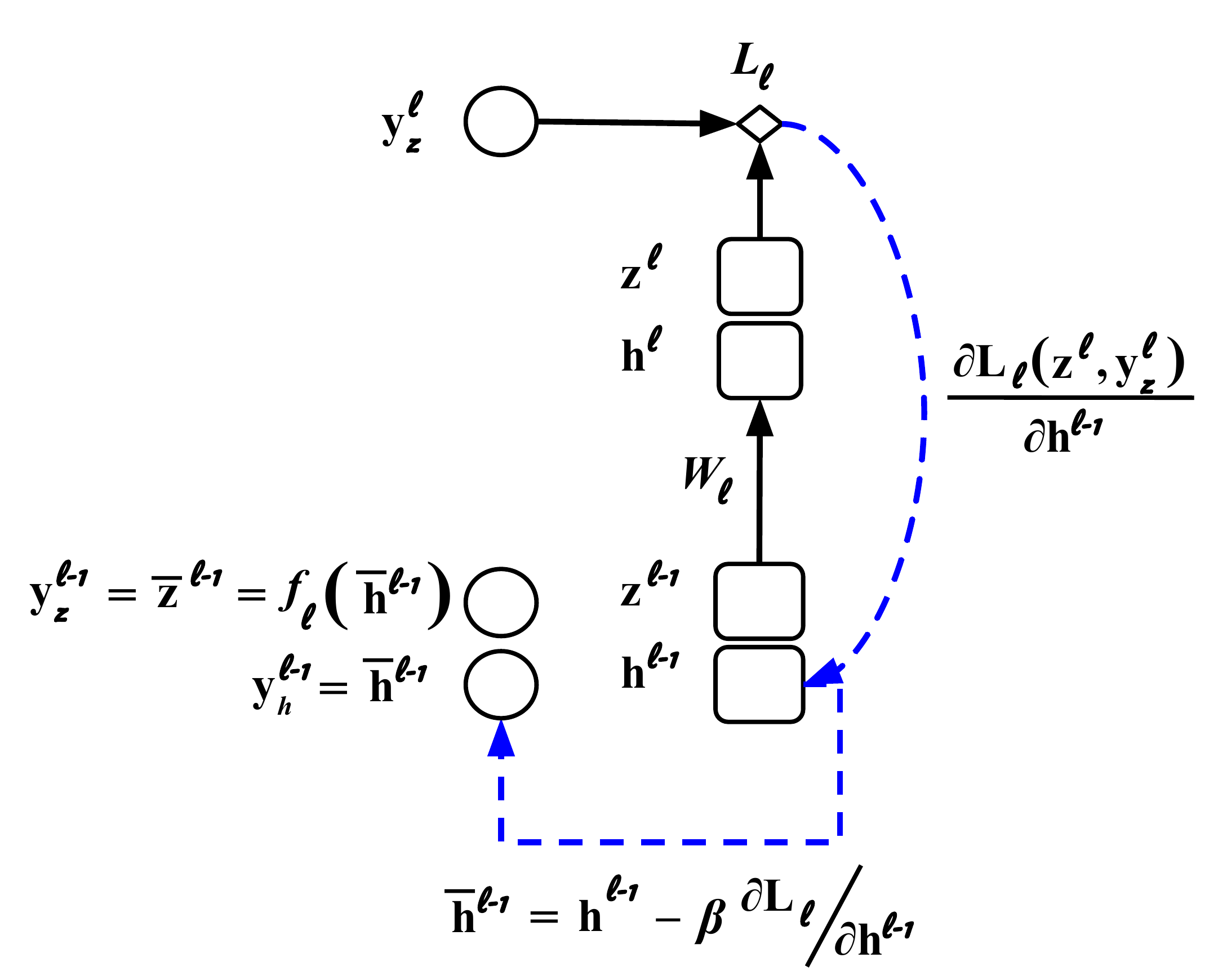}
}
\subfigure[LRA-fdbk target search.]{\label{fig:lra_update}\includegraphics[width=74mm]{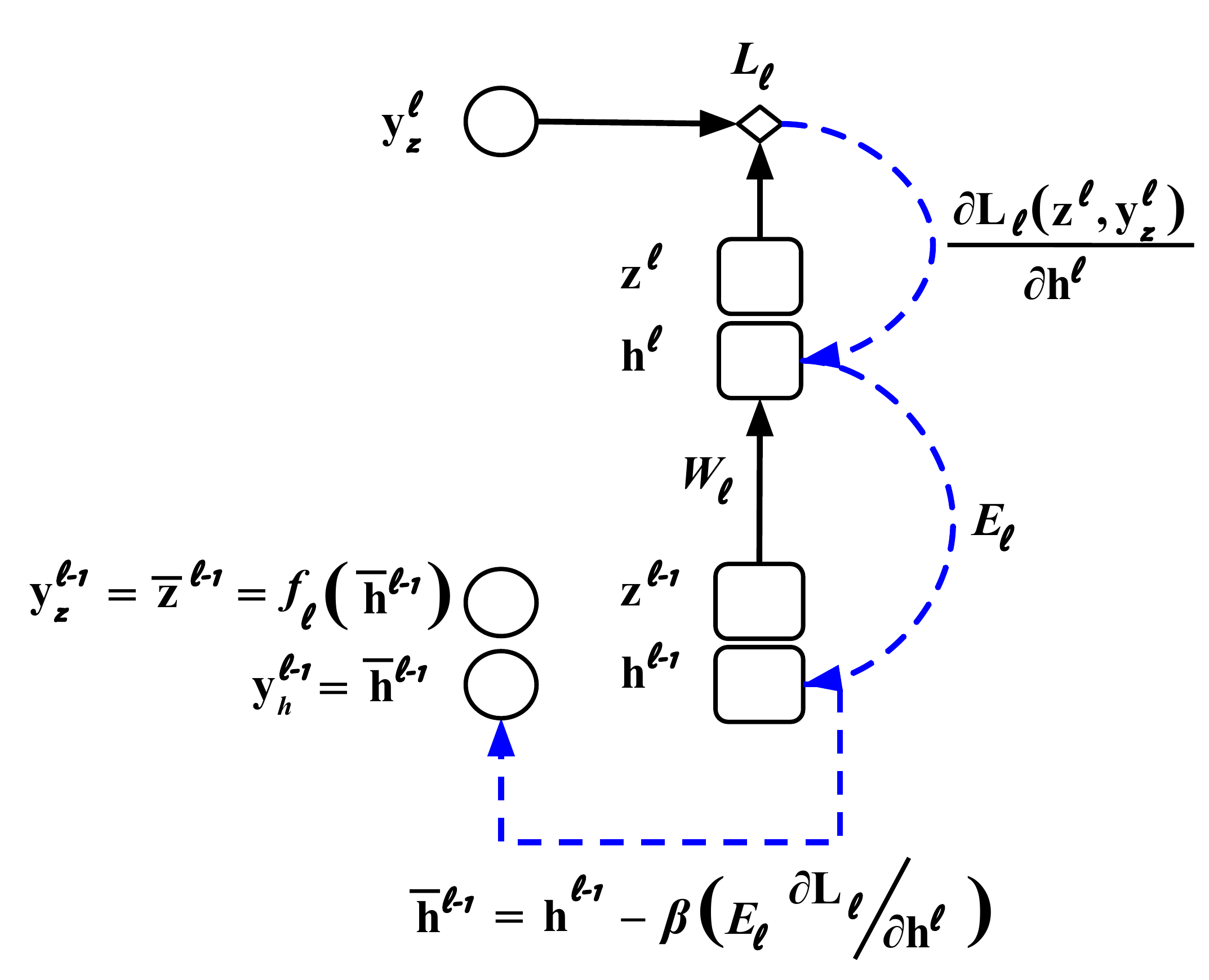}
}
\caption{The target calculation process for two variants of LRA applied to a differentiable multilayer perceptron (MLP). A computational subgraph is formed from two layers. Given the target for layer $\ell$, we compute a target $\mathbf{y}_h^{\ell-1}$ for the pre-activation of layer $\ell-1$ and use it to obtain a target $\mathbf{y}^{\ell-1}_z$ for the output of layer $\ell-1$. 
}
\label{fig:lra_algos}
\end{figure*}

To simplify notation, we describe LRA in the context of a multilayer perceptron architecture. However, LRA naturally applies to any stacked neural architecture, including those that are recurrent. In the supplementary material we provide the general formulation of LRA such that it may be applied recurrent models as well.

\subsection{Notation}
Let $\mathbf{z}^{\ell-1}$ be the inputs, as computed in the feed-forward phase, to the nodes in layer $\ell$. This means that $\mathbf{z}^\ell$ is also the output of layer $\ell$ (hence we call it the \emph{post-activation}). Let $W_\ell$ be the weight matrix at layer $\ell$ that multiplies the inputs $\mathbf{z}^{\ell-1}$. Let $\mathbf{h}^\ell$ be the \emph{pre-activation} of layer $\ell$ (i.e. $\mathbf{h}^\ell=W_\ell \mathbf{z}^{\ell-1}$). Let $\mathbf{f}^\ell$ be the vector of activation functions for layer $\ell$, so that $\mathbf{z}^\ell=\mathbf{f}^\ell(\mathbf{h}^\ell)$.

Following TargetProp \citep{lee2015targetprop}, during training, for each layer $\ell$ (starting from the output layer and working down the network) we set a target $\mathbf{y}_z^\ell$ for the feedforward post-activations $\mathbf{z}^\ell$. Then, for each layer $\ell$,  we take a gradient descent step  on the network parameters $W_{\ell}$ of layer $\ell$ to move $\mathbf{z}^\ell$  closer to $\mathbf{y}_z^\ell$ by minimizing a layer-specific loss function $L_\ell(\mathbf{z}^\ell, \mathbf{y}_z^\ell)$. Thus we iteratively choose new targets, then refine weights, then choose new targets, etc.
\footnote{During the weight update, for the purposes of computing derivatives, the current target $\mathbf{y}_z^\ell$ is not treated as a function of the input weights (so that the feed-forward output moves closer to the desired targets rather than vice versa).} 
However, unlike TargetProp \citep{lee2015targetprop}, LRA will not modify the network architecture. The setting of targets in differentiable networks is illustrated in Figure \ref{fig:lra_algos} (which shows two variants of LRA). Handling non-differentiable and stochastic activations requires a small modification that is discussed in Section \ref{non_diff_fun}.

\subsection{Setting the targets.}
We first explain how backpropagation fits into this framework. One way, noted by \citet{lee2015targetprop}, is to set the target for the output of layer $\ell$ to be its current feed-forward value plus the gradient of the overall loss with respect to the output of layer $\ell$ (and the loss function is squared loss). This is a global view - the target of each layer is set to optimize the global loss.  Instead, we present a more local view in that the target for layer $i$ is specifically set to help layer $\ell+1$ reach its own target. This view is the starting point that will allow us to consider the training of a deep network as training on computation subgraphs (as in Figure \ref{fig:lra_algos}) that can be optimized locally.

In our view of backprop as target prop, we use $L_2$ loss for each layer and set the target $\mathbf{y}_z^\ell$ of layer $\ell$ so that the difference between the layer $\ell$ output $\mathbf{z}^\ell$ and its target $\mathbf{y}_z^\ell$ is the gradient of the next\footnote{i.e., $\ell+1$} layer's loss with respect to $\mathbf{z}_\ell$ (i.e. the direction that $\mathbf{z}^\ell$ should move to achieve the steepest local change in the next layer's output).
\begin{lemma}
Consider a network with $n$ layers, input $\mathbf{x}$, feedforward activations $\mathbf{z}^\ell$ ($\ell=1,\dots, n$), output layer target $\mathbf{t}$, and squared loss $L_\ell$ for layers $1,\dots,\ell-1$ (that is, except the top layer). Recursively set $\mathbf{y}_z^n=\mathbf{t}$ and, for $\ell=n-1,\dots,1$, set $\mathbf{y}_z^\ell=\mathbf{z}^\ell - \nabla_{\mathbf{z}^\ell} L_{\ell+1}(\mathbf{z}^{\ell+1}, \mathbf{y}_z^{\ell+1})$, where $\mathbf{z}^{\ell+1}$ is considered a function of $\mathbf{z}^{\ell}$. A simultaneous one-step gradient descent update to all the weight matrices $W_1, \dots W_n$ with respect to the layer-wise losses is equivalent to one step of back-propagation.
\end{lemma}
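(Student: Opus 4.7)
The plan is to introduce the residual $\delta^\ell := \mathbf{z}^\ell - \mathbf{y}_z^\ell$ and show by downward induction on $\ell$ that it coincides with the standard backpropagated error signal $g^\ell := \nabla_{\mathbf{z}^\ell} L(\mathbf{z}^n,\mathbf{t})$, where $L$ denotes the top-layer loss and $\mathbf{z}^{\ell+1},\dots,\mathbf{z}^n$ are regarded as functions of $\mathbf{z}^\ell$ through the feedforward map. Once $\delta^\ell = g^\ell$ is secured for every $\ell$, the claim that the LRA weight updates equal the backprop updates becomes a one-line chain-rule computation.

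For the base case I would take $\ell=n-1$. Since $\mathbf{y}_z^n=\mathbf{t}$ is a constant, the recursion gives $\delta^{n-1} = \nabla_{\mathbf{z}^{n-1}} L(\mathbf{z}^n,\mathbf{t}) = g^{n-1}$ immediately, with no appeal to the squared-loss assumption yet. For the inductive step at any internal layer $\ell<n-1$, I use that $L_{\ell+1}$ is squared loss, so $\nabla_{\mathbf{z}^{\ell+1}} L_{\ell+1}(\mathbf{z}^{\ell+1},\mathbf{y}_z^{\ell+1}) = \mathbf{z}^{\ell+1} - \mathbf{y}_z^{\ell+1} = \delta^{\ell+1}$. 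The chain rule, with $\mathbf{y}_z^{\ell+1}$ held constant per the footnote, then yields
$\delta^\ell = \nabla_{\mathbf{z}^\ell} L_{\ell+1} = \bigl(\partial\mathbf{z}^{\ell+1}/\partial \mathbf{z}^\ell\bigr)^\top \delta^{\ell+1}$.
Combining the inductive hypothesis $\delta^{\ell+1}=g^{\ell+1}$ with the textbook backprop recursion $g^\ell = \bigl(\partial\mathbf{z}^{\ell+1}/\partial\mathbf{z}^\ell\bigr)^\top g^{\ell+1}$ then produces $\delta^\ell = g^\ell$, closing the induction.

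It remains to compare the per-layer parameter updates. For each hidden layer $\ell<n$, the LRA gradient of the local squared loss against $W_\ell$ is $\nabla_{W_\ell} L_\ell = \bigl[\delta^\ell \odot (\mathbf{f}^\ell)'(\mathbf{h}^\ell)\bigr](\mathbf{z}^{\ell-1})^\top$, because $\mathbf{y}_z^\ell$ is treated as constant. Substituting $\delta^\ell = g^\ell$ reproduces the backprop expression $\bigl[g^\ell \odot (\mathbf{f}^\ell)'(\mathbf{h}^\ell)\bigr](\mathbf{z}^{\ell-1})^\top$. At the top layer, LRA differentiates $L(\mathbf{z}^n,\mathbf{t})$ against $W_n$ directly, so agreement with backprop is automatic there. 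Since both procedures read all feedforward activations $\mathbf{z}^\ell$ and all targets $\mathbf{y}_z^\ell$ from the pre-update parameters, carrying out the $n$ updates simultaneously preserves the identification.

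The main obstacle is not a technical one but a bookkeeping one: keeping careful track of what is frozen during each differentiation — in particular that the targets $\mathbf{y}_z^{\ell+1}$ count as constants both when $\delta^{\ell+1}$ is used to propagate the error through layer $\ell+1$ and when $W_\ell$ is subsequently updated — and recognising that the choice of squared loss at every internal layer is exactly the hypothesis that forces $\nabla_{\mathbf{z}^{\ell+1}} L_{\ell+1} = \delta^{\ell+1}$, so that the LRA recursion collapses into the standard backprop recursion. Any non-quadratic internal loss would decouple $\delta$ from $g$ and break the equivalence.
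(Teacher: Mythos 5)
Your proof is correct, and it reaches the paper's conclusion by a genuinely different decomposition. You introduce the residual $\delta^\ell := \mathbf{z}^\ell - \mathbf{y}_z^\ell$ and prove the activation-space identity $\delta^\ell = g^\ell := \nabla_{\mathbf{z}^\ell} L_n(\mathbf{z}^n,\mathbf{t})$ by a single downward induction on the layer index $\ell$, after which the per-layer weight updates match backprop by one application of the chain rule. The paper never identifies the residual with the backprop adjoint: instead, for each fixed weight vector $W_{\ell,j}$ it proves the stronger weight-space statement $\frac{d}{dW_{\ell,j}}L_k(\mathbf{z}^{k},\mathbf{y}_z^{k}) = \nabla_{W_{\ell,j}} L_n(\mathbf{z}^n,\mathbf{t})$ for all $k \geq \ell$, by downward induction on the \emph{loss} index $k$, carrying the derivative with respect to the weights through every step of the induction. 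The two arguments use the same two ingredients in the same places --- substituting the frozen target's defining value after differentiating (your ``held constant per the footnote'' is exactly what the paper implements with its evaluation-bar notation $\big|_{\mathbf{y}_z^{k-1}=\cdots}$), and the fact that the gradient of the internal squared loss equals the residual --- and in both proofs the top-layer loss $L_n$ may be arbitrary. What your factorization buys: one induction instead of one per weight block, an explicit statement that LRA's target displacement \emph{is} the backprop error signal (conceptually useful, e.g., for reading LRA-fdbk as replacing the Jacobian transpose $(\partial \mathbf{z}^{\ell+1}/\partial \mathbf{z}^{\ell})^\top$ with a fixed matrix $\mathbf{E}_{\ell+1}$), and a sharp localization of where the quadratic hypothesis enters, as your closing remark notes. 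What the paper's route buys: it works entirely with total derivatives of the layer-wise losses, so it re-derives the backprop recursion rather than quoting $g^\ell = (\partial\mathbf{z}^{\ell+1}/\partial\mathbf{z}^\ell)^\top g^{\ell+1}$ as an external fact, at the cost of a heavier induction. Two cosmetic caveats on your write-up: the Hadamard form $\bigl[\delta^\ell \otimes (\mathbf{f}^\ell)'(\mathbf{h}^\ell)\bigr](\mathbf{z}^{\ell-1})^\top$ presumes elementwise activations (for a general $\mathbf{f}^\ell$ write the Jacobian $(\partial \mathbf{z}^\ell/\partial\mathbf{h}^\ell)^\top \delta^\ell$), and your final claim that any non-quadratic internal loss would break the equivalence is a reasonable aside but is neither asserted by the lemma nor needed for it.
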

\begin{proof}
It suffices to show that if $W_{\ell,j}$ is the  weight vector for node $j$ in layer $\ell$,  then the derivative of the loss for layer $\ell$ with respect to these weights is the same as the derivative of the overall loss with respect to these weights: $\frac{d}{dW_{\ell,j}}L_\ell(\mathbf{z}^{\ell}, \mathbf{y}_z^{\ell}) \equiv\frac{d}{dW_{\ell,j}}\frac{1}{2}||\mathbf{z}^{\ell}-\mathbf{y}_z^{\ell}||_2^2 = \grad_{W_{\ell,j}}L_n(\mathbf{z}^n, \mathbf{t})$ for each $j$ and for $\ell=1,\dots, n-1$ (since the update to the weights $W_n$ at the top layer is always a gradient descent update). We will prove the stronger statement, that for any $k \geq \ell$, then $\frac{d}{dW_{\ell,j}}L_k(\mathbf{z}^{k}, \mathbf{y}_z^{k})  = \grad_{W_{\ell,j}}L_n(\mathbf{z}^n,\mathbf{t})$ for each $j$ and for $\ell=1,\dots, n-1$

In the base case, when $k=n-1$, 
\begin{align*}
\lefteqn{\frac{d}{dW_{\ell},j}L_{n-1}(\mathbf{z}^{n-1},\mathbf{y}_z^{n-1})
=\frac{d}{dW_{\ell},j}\frac{1}{2}||\mathbf{z}^{n-1}-\mathbf{y}_z^{n-1}||_2^2}\\
&=\frac{d\mathbf{z}^{n-1}}{dW_{\ell,j}}(\mathbf{z}^{n-1}-\mathbf{y}_z^{n-1})\Big|_{\mathbf{y}_z^{n-1}=\mathbf{z}^{n-1} - \nabla_{\mathbf{z}^{n-1}} L_n{(\mathbf{z}^n, \mathbf{t})}}\\
&=\frac{d\mathbf{z}^{n-1}}{dW_{\ell,j}}\nabla_{\mathbf{z}^{n-1}} L_n{(\mathbf{z}^{n}, \mathbf{t})}
=\frac{d}{dW_{\ell,j}}L_n{(\mathbf{z}^{n}, \mathbf{t})}
\end{align*}
by the chain rule. For the inductive step, we show that if the result is true for $k$ (i.e. $\frac{d}{dW_{\ell,j}}L_k(\mathbf{z}^{k}, \mathbf{y}_z^{k})  = \grad_{W_{\ell,j}}L_n( \mathbf{z}^n, \mathbf{t})$) then it is also true for $k-1$ (i.e. $\frac{d}{dW_{\ell,j}}L_{k-1}(\mathbf{z}^{k-1}, \mathbf{y}_z^{k-1})  = \grad_{W_{\ell,j}}L_n( \mathbf{z}^n, \mathbf{t})$). So
\begin{align*}
\lefteqn{\frac{d}{dW_{\ell},j}L_{k-1}(\mathbf{z}^{k-1},\mathbf{y}_z^{k-1})
=\frac{d}{dW_{\ell},j}\frac{1}{2}||\mathbf{z}^{k-1}-\mathbf{y}_z^{k-1}||_2^2}\\
&=\frac{d\mathbf{z}^{k-1}}{dW_{\ell,j}}(\mathbf{z}^{k-1}-\mathbf{y}_z^{k-1})\Big|_{\mathbf{y}_z^{k-1}=\mathbf{z}^{k-1} - \nabla_{\mathbf{z}^{k-1}} L_{k}(\mathbf{z}^{k}, \mathbf{y}_z^{k})}\\
&=\frac{d\mathbf{z}^{k-1}}{dW_{\ell,j}}\nabla_{\mathbf{z}^{k-1}} L_{k}{(\mathbf{z}^{k}, \mathbf{y}_z^{k})}\\
&=\frac{d}{dW_{\ell,j}}L_{k}{(\mathbf{z}^{k}, \mathbf{y}_z^{k})}
=\frac{d}{dW_{\ell}}L_{n}{(\mathbf{z}^{n}, \mathbf{t})}
\end{align*}
by the chain rule and inductive hypothesis.
\end{proof}

Hence mini-batch gradient descent can be viewed as, for every iteration, selecting a mini-batch, choosing targets for each layer to obtain a per-layer optimization problem, and partially optimizing the layer-wise loss (with one gradient step). 

This view naturally leads to three ways to improve training:
\begin{itemize}
\item The target $\mathbf{y}_z^\ell$, intuitively, is a desired value for the output of layer $\ell$ that will help layer $\ell+1$ lower its loss. Thus it is important to ensure that the target is actually representable by layer $\ell$. Therefore we should look at the pre-activation $\mathbf{h}^\ell$ (i.e. inputs to the nodes at layer $\ell$) and determine what values of $\mathbf{h}^\ell$, when fed through the activation function of layer $\ell$, will help layer $\ell+1$ reduce its loss. We set $\mathbf{y}_h^\ell$ to be the target for the pre-activation $\mathbf{h}^\ell$ and feed these pre-activation targets through the activation function to obtain the targets $\mathbf{y}_z^\ell$ for layer $\ell$, as shown in Figures \ref{fig:lra_algos}.  One way to set the pre-activation target is through one step of gradient descent on the local loss function: $\mathbf{y}_h^\ell = \mathbf{h}^\ell - \eta \grad_{\mathbf{h}^{\ell}} L_{\ell+1}(\mathbf{z}^{\ell+1},\mathbf{y}^{\ell+1}_z)$ and then $\mathbf{y}_z^\ell = \mathbf{f}^\ell(\mathbf{y}_h^\ell)= \mathbf{f}^\ell\left(\mathbf{h}^\ell - \eta \grad_{\mathbf{h}^{\ell}} L( \mathbf{z}^{\ell+1}-\mathbf{y}^{\ell+1}_z)\right)$.
\item To choose better pre-activation targets $\mathbf{y}_h^\ell$, we can perform multiple gradient descent steps for $\mathbf{y}_h^\ell$ on the loss of the next layer $L_{\ell+1}(\mathbf{z}^{\ell+1}, \mathbf{y}_z^{\ell+1})$.
Such a procedure can be viewed as walking along the manifold of $\mathbf{z}^\ell$ parametrized by $\mathbf{h}^\ell$, to find a pre-activation target $\mathbf{y}_h^\ell$ (and hence $\mathbf{y}_z^\ell$) that would help the next layer reduce its loss. An alternative to gradient steps is to use feedback alignment to update the targets (instead of the standard approach of updating the weights \cite{nokland2016direct,lillicrap2016random}). 
\item The per-layer loss can be customized for each layer. For example, the least squares loss can be replaced by the $L_1$ norm or the log-penalty (Cauchy).
\end{itemize}
After the targets are set, the weights are updated with one step of gradient descent: $\mathbf{W}_\ell\gets \mathbf{W}_\ell - \grad_{\mathbf{W}_\ell} \mathcal{L}_\ell(\mathbf{z}^\ell, \mathbf{y}^\ell_z)$ -- possibly using an adaptive learning rate rule, e.g., Adam or RMSprop. To handle discrete-valued and/or stochastic activations, we employ the notion of a ``short-circuit'' connection (only used during training) to form the error pathway around the activation. This detail is discussed in Section \ref{non_diff_fun}.  

\subsection{Divide and Conquer: The Computation Subgraph}
\label{comp_subgraph}
LRA aims to decompose the larger credit assignment problem in neural architectures into smaller, easier-to-solve sub-problems. With this in mind, we can view any stacked neural architecture, or rather, its full, underlying operation graph, as a composition of ``computation subgraphs''. A directed, acyclic computation graph may be decomposed into a set of smaller direct subgraphs, where a subgraph's boundaries are defined as the set of input node variables and the set of output node variables. 

To be more specific, the input to a subgraph of a multilayer neural architecture is the vector of pre-activation values, $\mathbf{h}_{\ell-1}$ computed from the subgraph below (unless this is the bottommost subgraph which means that the input is simply the raw feature vector). The output of the subgraph is simply the post-activation it computes as a function of its input, or simply, $\mathbf{z}_{\ell}$. Note that while we present our notation to imply that a computation subgraph only encapsulates two layers of actual processing elements, layers $\ell-1$ and $\ell$, the subgraph itself could be deep and include processing elements one decides are internal to the subgraph itself. This would allow us to house self-connected nodes inside the graph as well, assuming that the subgraph is temporal and is intended to compute given sequences of inputs/outputs. In choosing boundaries, one could distinguish which node layers are to be representations (or latent variables) and which node layers are simply inner computation elements, meant to support the representation nodes. 
%
Figure \ref{fig:lra_algos} show one such  subgraph 
Note that this subgraph also includes a loss function $\mathcal{L}_\ell$ and the targets $\mathbf{y}^\ell_z$.


\begin{algorithm}
   \caption{The LRA Algorithm applied to differentiable, feedforward neural architectures. 
   }
   \label{algo:lra}
\begin{algorithmic}
   \STATE {\bfseries Input:} data $(\mathbf{x}, \mathbf{t})$, number steps \emph{K}, halting criterion $\epsilon$, step-size $\beta$, model parameters $\Theta = \{W_1, \mathbf{b}_1, \cdots, W_\ell, \mathbf{b}_\ell, \cdots, W_n, \mathbf{b}_n \}$, and norm constraints $\{ c_1, c_2 \}$
   \STATE {\bfseries Specify:} $\{f_1(\mathbf{h}), \cdots, f_\ell(\mathbf{h}), \cdots, f_n(\mathbf{h}) \}$, $\{ \mathcal{L}_1(\mathbf{z}, \mathbf{y}), \cdots, \mathcal{L}_\ell(\mathbf{z}, \mathbf{y}), \cdots, \mathcal{L}_n(\mathbf{z}, \mathbf{y}) \}$, and, optionally, error weights $\{E_1, \cdots, E_\ell, \cdots, E_n \}$
   \STATE \COMMENT $\mathbf{y}^\ell_h$: what we would like input to the activation function at layer $\ell$ to be
   \STATE \COMMENT $\mathbf{y}^\ell_z$: what we would like output of activation function $f_\ell$ at layer $\ell$ to be
   \STATE \COMMENT $\mathbf{h}^\ell$: input to activation function at layer $\ell$ resulting from feed forward phase
      \STATE \COMMENT $\mathbf{z}^\ell$: output of activation function at layer $\ell$ resulting from feed forward phase
      \STATE \COMMENT $\mathbf{\bar{h}}^\ell$ indicates temporary variable for $\mathbf{h}^\ell$ and $\mathbf{\bar{z}}^\ell$ indicates temporary variable for $\mathbf{z}^\ell$
    \STATE \textbf{Function}\{ComputeUpdateDirection\}\{$(\mathbf{x}, \mathbf{t}),\Theta, K, c_1, c_2, \beta, \epsilon $\}
    \bindent
    \STATE $\mathbf{z}^0 = \mathbf{x} $ \COMMENT Run feedforward pass to get initial layer-wise statistics
	\FOR{$\ell=1$ {\bfseries to} $n$}
    	\bindent
        \STATE $W_\ell, \mathbf{b}_\ell, \leftarrow \Theta$ 
    	\STATE $\mathbf{h}^\ell \leftarrow W_\ell \mathbf{z}^{\ell-1} + \mathbf{b}_\ell $, $\mathbf{y}^\ell_h \leftarrow \mathbf{h}^\ell$, $\mathbf{\bar{h}}^\ell \leftarrow \mathbf{h}^\ell$
    	\STATE $\mathbf{z}^\ell \leftarrow f_\ell( \mathbf{h}^\ell )$, $\mathbf{y}^\ell_z \leftarrow \mathbf{z}^\ell$, $\mathbf{\bar{z}}^\ell \leftarrow \mathbf{z}^\ell$
        \eindent
    \ENDFOR
    \STATE $\mathbf{y}^n_z \leftarrow \mathbf{t}$ \COMMENT Override top-level target with correct target from training data
    \STATE $\ell = n$
    \WHILE{$\ell \geq 1$ and $\mathcal{L}_\ell( \mathbf{z}^{\ell} , \mathbf{y}^\ell_z) \geq \epsilon$}
    	\bindent
          \STATE $W_\ell, \mathbf{b}_\ell, \leftarrow \Theta$
           \STATE \COMMENT Calculate parameter update direction for layer $\ell$, comparing initial guess to target
           \STATE \COMMENT Normalize$(\cdot,\cdot)$ is defined in Eq \ref{eqn:normalize}.
          \STATE $\nabla_{W_\ell} \leftarrow Normalize( \frac{\partial \mathcal{L}_\ell( \mathbf{z}^{\ell}, \mathbf{y}^\ell_z)}{\partial W_\ell }, c_1 )$, $\nabla_{\mathbf{b}_\ell} \leftarrow Normalize( \frac{\partial \mathcal{L}_\ell( \mathbf{z}^\ell , \mathbf{y}^\ell_z)}{\partial \mathbf{b}_\ell }, c_1 ) $
          \STATE \COMMENT Find target for layer $\ell-1$ (Note: could add early stopping criterion)
          \FOR{$k = 1$ {\bfseries to} $K$} 
          \bindent
          		\STATE \COMMENT Calculate pre-activation displacement
          		\IF{LRA-diff} 
          		\bindent
          		\STATE $\Delta_{h^{\ell-1}} \leftarrow \frac{\partial \mathcal{L}_\ell(\mathbf{z}^\ell, \mathbf{y}^\ell_z) }{\partial \mathbf{h}^{\ell-1}} = (W_\ell)^T \Big( \frac{\partial \mathcal{L}_\ell(\mathbf{z}^\ell, \mathbf{y}^\ell_z) }{\partial \mathbf{z}^\ell} \otimes \frac{\partial f_\ell(\mathbf{h}_\ell) }{\partial \mathbf{h}^\ell} \Big) \otimes \frac{\partial f_{\ell-1}(\mathbf{h}^{\ell-1}) }{\partial \mathbf{h}^{\ell-1}}$
          		\eindent
          		\ELSIF{LRA-fdbk}
          		\bindent
          		\STATE $\Delta_{h^{\ell-1}} \leftarrow E_\ell \frac{\partial \mathcal{L}_\ell(\mathbf{z}^\ell, \mathbf{y}^\ell_z) }{\partial \mathbf{h}^{\ell}} = E_\ell \Big( \frac{\partial \mathcal{L}_\ell(\mathbf{z}^\ell, \mathbf{y}^\ell_z) }{\partial \mathbf{z}^\ell} \otimes \frac{\partial f_\ell(\mathbf{h}^\ell) }{\partial \mathbf{h}^\ell} \Big)$
          		\eindent
          		\ENDIF
          		\STATE $\Delta_{h^{\ell-1}} \leftarrow Normalize( \Delta_{h^{\ell-1}}, c_2 ) $ 
          		\STATE \COMMENT Recalculate neural activities of subgraph
          		\STATE $\mathbf{\bar{h}}^{\ell-1} \leftarrow \mathbf{\bar{h}}^{\ell-1} - \beta \Delta_{h^{\ell-1}}, \quad \mathbf{\bar{z}}^{\ell-1} \leftarrow f_{\ell-1}( \mathbf{\bar{h}}^{\ell-1} )$
          		\STATE $\mathbf{\bar{h}}^\ell \leftarrow  W_\ell \mathbf{\bar{z}}^{\ell-1} + \mathbf{b}_\ell, \quad $ $\mathbf{\bar{z}}^\ell \leftarrow f_\ell( \mathbf{\bar{h}}^\ell )$ 
          \eindent
          \ENDFOR
          \STATE $\mathbf{y}^{\ell-1}_z \leftarrow \mathbf{\bar{z}}^{\ell-1}$ \COMMENT Update variable holding target for subgraph below
        \STATE $\ell = \ell - 1$
        \eindent
        \ENDWHILE
	\STATE \textbf{Return} $\nabla_{\Theta} = \{ \nabla_{W_1}, \nabla_{\mathbf{b}_1}, \cdots, \nabla_{W_\ell}, \nabla_{\mathbf{b}_\ell}, \cdots, \nabla_{W_n}, \nabla_{\mathbf{b}_n} \}$
    \eindent
    \STATE \textbf{EndFunction}
\end{algorithmic}
\end{algorithm}

\subsubsection{Instantiation: The Differentiable Multilayer Perceptron}
\label{lra:mlp_case}
Although LRA can be extended to recurrent networks (an extension is presented in the supplementary materials), in this paper, for the sake of explanation, we specialized it to 
feedforward neural architectures. 

Algorithm \ref{algo:lra} presents the pseudocode for LRA for differentiable networks (for non-differentiable networks, additional notation is needed, so this modification is deferred to Section \ref{non_diff_fun}). The pseudocode presents two versions of LRA: LRA-diff (the immediate generalization of backprop we have been discussing) and LRA-fdbk which incorporates ideas from feedback alignment \cite{lillicrap2016random}. The difference between them is that LRA-diff sets the target for layer $\ell$ using the full derivative $\frac{\delta L_{\ell+1}(\mathbf{z}^{\ell+1},\mathbf{y}^{\ell+1}_z)}{\delta \mathbf{h}^{\ell}}$. Using the chain rule, this derivative is expressed as $\frac{\delta \mathbf{h}^{\ell+1}}{\delta \mathbf{h}^\ell}\frac{\delta L_{\ell+1}(\mathbf{z}^{\ell+1},\mathbf{y}^{\ell+1}_z)}{\delta \mathbf{h}^{\ell+1}}$. LRA-fdbk, during training, ``short-circuits'' the connection between $\mathbf{h}^{\ell+1}$ and $\mathbf{h}_\ell$ (as in Figure \ref{fig:lra_algos}) by replacing 
$\frac{\delta \mathbf{h}^{\ell+1}}{\delta \mathbf{h}^\ell}$
in the chain rule with a fixed matrix $\mathbf{E}_{\ell+1}$ that is randomly chosen before the start of training (e.g., sample its weights from a standard Gaussian).

Such short-circuit operations have been shown to be useful empirically \citep{lillicrap2016random,nokland2016direct} although they are not very well understood theoretically. Their use in prior work even allowed networks consisting of tanh nonlinearities (but not ReLU) to be trained from initial weights equal to 0 \cite{nokland2016direct}. In contrast, our experiments show that LRA-fdbk is even more robust and can train a much broader set of networks from 0. 

In Algorithm \ref{algo:lra}, $\otimes$ is used to denote the Hadamard product (or elementwise multiplication). 

The normalization function $Normalize(\cdot)$ depicted in Algorithm \ref{algo:lra} is defined formally as:
\begin{align}
Normalize(\Delta, c) = \Bigg \{\frac{c}{||\Delta||}\Delta \mbox{, if } ||\Delta|| \geq c \mbox{, and } \Delta \mbox{, if } ||\Delta|| < c \Bigg \}\label{eqn:normalize}
\end{align}
where $\Delta$ is any vector  


There are many possible choices for the local losses that measure the discrepancy between a layer's output and its target. One possibility is
the L2-norm, defined as
\begin{align}
\mathcal{L}_\ell(\mathbf{z}, \mathbf{y}) = \frac{1}{2} \sum^{|\mathbf{z}|}_{i=1} (\mathbf{y}_i - \mathbf{z}_i)^2 \mbox{.} \label{gaussian_loss}
\end{align}
Another choice is the $L_1$ norm, which is defined as:
\begin{align}
\mathcal{L}_\ell(\mathbf{z}, \mathbf{y}) = \sum^{|\mathbf{z}|}_{i=1} |(\mathbf{y}_i - \mathbf{z}_i)|\mbox{.} \label{laplacian_loss}
\end{align}
After preliminary experimentation, we actually found a different loss, the log-penalty, to work much better with LRA for a wide variety of networks. 
The log-penalty function is derived from the log likelihood of the Cauchy distribution. In this paper, we implemented the log-penalty loss, for a single vector, as:
\begin{align}
\mathcal{L}_\ell(\mathbf{z}, \mathbf{y}) = \sum^{|\mathbf{z}|}_{i=1} \log(1 + (\mathbf{y}_i - \mathbf{z}_i)^2) \label{cauchy_loss}
\end{align}
where the loss is computed over all dimensions $|\mathbf{z}|$ of the vector $\mathbf{z}$ (where a dimension is indexed/accessed by integer $i$). 


LRA also performs variable depth credit assignment because of the condition in the while loop that stops the backward pass early when the feedforward activation of a layer is close to its target (i.e. the local loss  is at most $\epsilon$).
This feature is not strictly necessary, but it is a nice addition that allows it to save computation by eventually only modifying the top layers of a network.

Finally, for the inner loop which successively refines the target, we found that with LRA-fdbk, the best performance is achieved when $K=1$. This setting also makes its computational requirements comparable to backprop.

\subsubsection{Handling Non-Differentiable/Stochastic Activations}
\label{non_diff_fun}
The only thing that prevents the LRA-fdbk version of Algorithm \ref{algo:lra} from handling non-differentiable and stochastic units is the computation of $\frac{\partial \mathcal{L}_\ell(\mathbf{z}^\ell, \mathbf{y}^\ell_z) }{\partial \mathbf{h}^{\ell}}$ as it involves, via chain rule, the derivative of the activation function $f_\ell$ of layer $\ell$.

This difficulty is easily circumvented with a trick inspired by \citet{lee2015targetprop}. We illustrate it with the following two activations (1)  $sign(h)$ (also known as the Heavyside step function) which returns $-1$, 0, or 1, depending on whether $h$ is negative, $0$, positive, respectively, and (2) $bernoulli(h)$ which outputs 1 with probability $\sigma(h)$ and $0$ with probability $1-\sigma(h)$ (where $\sigma$ is the sigmoid function). 

This means that we can rewrite these activations as a composition of two functions $g(f(h))$, where $f$ is a differentiable approximation of the activation function. For instance:
\begin{align*}
sign(h) &= sign(tanh(h))\\
bernoulli(h) &= bernoulli^*(\sigma(h))
\end{align*}
where bernoulli$^*(x)$ returns 1 with probability $x$ and $0$ otherwise.

Splitting activation functions this way allows us to extend our notation so that:
\begin{align*}
\mathbf{z}^\ell &= f(\mathbf{h}^\ell)\\
\mathbf{z}^\ell_* &= g(\mathbf{h}^\ell)
\end{align*}
Now, $\mathbf{z}^\ell$ is an intermediate output and $\mathbf{z}^\ell_*$ is the output of layer $\ell$.
The modification to LRA is almost trivial: we use $\mathbf{z}^\ell_*$ in the feed-forward phase\footnote{i.e. in Algorithm \ref{algo:lra} we set $\mathbf{h}^\ell = W_\ell \mathbf{z}^{\ell-1}_* + \mathbf{b}_\ell$ then   $\mathbf{z}^\ell = f_\ell(\mathbf{h}^\ell)$, and then  $\mathbf{z}^\ell_* = g(\mathbf{z}^\ell)$} but set the targets for $\mathbf{z}^\ell$ instead of $\mathbf{z}^\ell_*$. For reference, the complete algorithm is shown in the supplementary materials.

\subsection{Overcoming Poor Initializations}
\label{poor_inits}
Poor initializations affect networks with various activations differently, as we shall observe in the experiments later in this chapter. LRA, in both forms, can be seen as correcting for poor settings--something back-propagation of errors cannot do. LRA-diff can, if given a large enough local computation budget $K$, ``walk away'' from poor settings quickly. That is, even if the penultimate layer $n-1$ provides bad features for the final classification, the target for that layer will be a set of features that help the final layer make a better prediction. Then, recursively, if layer $n-2$ provides bad features, the target for that layer will be a set of features that will help layer $n-1$ achieve its target which, in turn, will help the final layer.

In the experiments, we investigate how robust LRA is to various settings of the initialization scheme and how other algorithms, especially target propagation and feedback alignment, compare.

\subsection{Overcoming Exploding and Vanishing Gradients}
\label{gradient_problems}
When neural networks are made deeper, backprop error gradients must pass backward through many layers using the global feedback pathway that involves a series of multiplications. As a result of these extra multiplications, these gradients tend to either explode or vanish \citep{bengio1994learning,pascanu2013difficulty}. In order to keep the values of the gradients within reasonable magnitudes and prevent zero gradients, it is common to impose constraints to ensure that layers are sufficiently linear in order to prevent post-activations from reaching their saturation regimes. However, this required linearity can create less than desirable side-effects, e.g., adversarial samples \citep{szegedy2013intriguing,ororbia2017unifying}.

LRA handles the vanishing gradient problem by tackling the global credit assignment in a local fashion, using the perspective of computation subgraphs as described earlier in Section \ref{comp_subgraph}.  In other words, LRA treats the underlying graph of the neural graphical model as a series of subgraphs and then proceeds to optimize each subproblem. This is the essence of local learning in LRA. 
To overcome exploding gradients, LRA makes use of gradient re-projection, which is often used to introduce stability in recurrent neural networks \citep{pascanu2013difficulty}. Re-projection, embodied in the $Normalize(\cdot)$ function call, is used in two places within the LRA procedure--1) rescale parameter updates $\nabla_{W_\ell}$ to have Frobenius norm $c_1$ and $\nabla_{b_\ell}$ to have $L_2$ norm equal to $c_1$, and 2) rescale the calculated representation displacement to have $L_2$ norm equal to $c_2$. 

\section{Experimental Results}
\label{sec:exp_results}
The goal of these experiments is to test how easy it is to train deep networks with different algorithms (compared to LRA) and how robust these algorithms are to various settings, such as choice of initialization weights. Our claim is that LRA makes training of algorithms very easy and does not require much tuning to achieve high levels of accuracy -- our goal is not necessarily to reach the state-of-the-art on any particular task (which typically requires expensive hyperparameter tuning). One of the use-cases of LRA is for researchers outside of deep learning who want to experiment with many different (and possibly novel) architectures designed for their data.

For all experiments in this paper, we keep the parameter optimization setting the same for all scenarios so that we may tease out the effects of individual learning algorithms instead. Specifically, updates calculated by each algorithm are used in a simple first-order gradient descent with a fixed learning rate of $0.01$ and mini-batches of 50 samples. 

We briefly describe the datasets used to investigate the ability of each learning algorithm in training deep, nonlinear networks.

\begin{figure*}
\centering     
{\includegraphics[width=100mm]{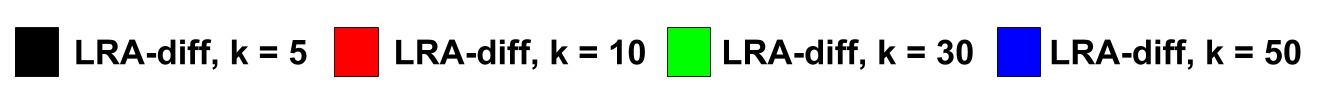}}
\subfigure[10 epochs of \emph{LRA-diff} (MNIST).]{\includegraphics[width=63mm]{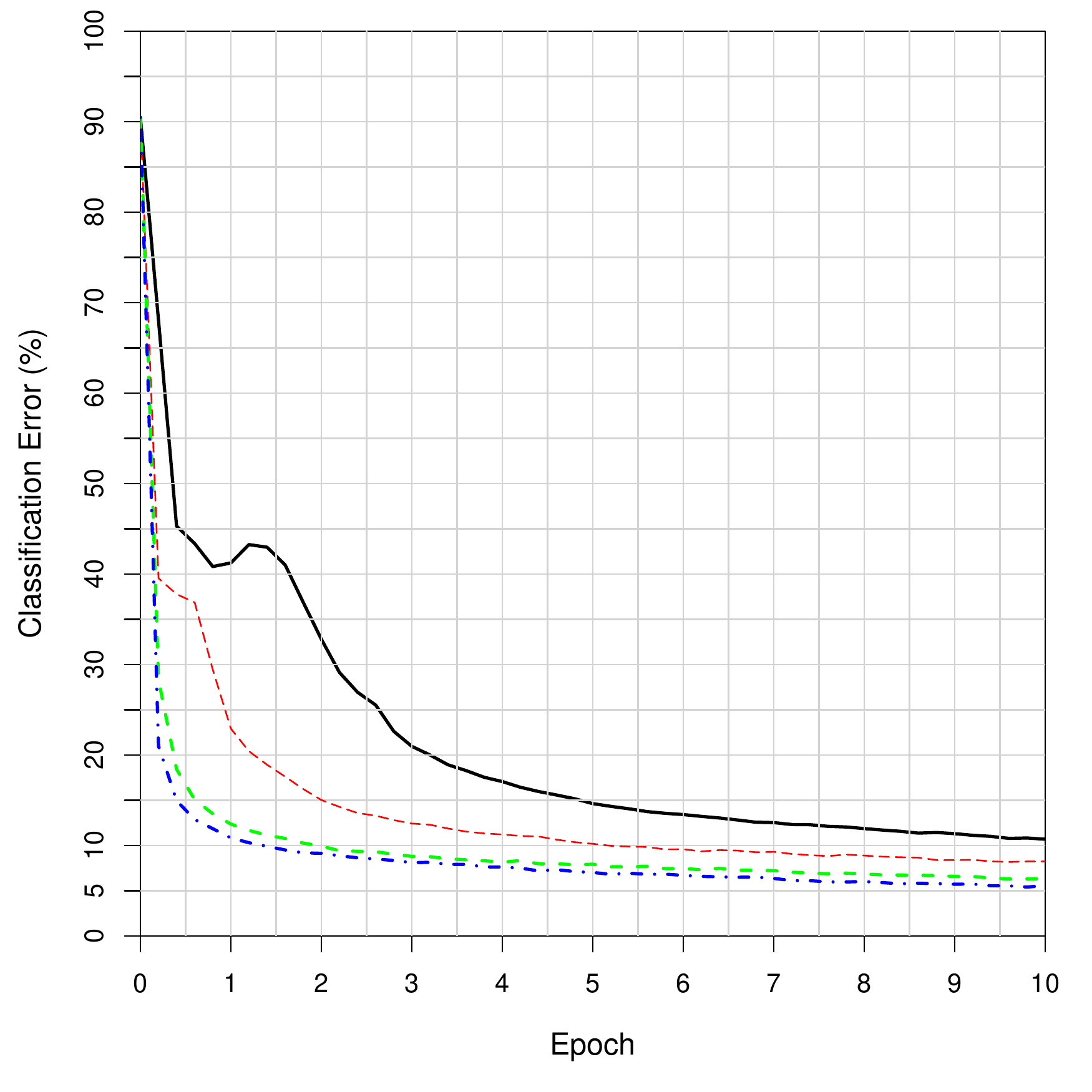}}
\subfigure[10 epochs of \emph{LRA-diff}  (Fashion MNIST).]{\label{fig:lra1_fmnist}\includegraphics[width=63mm]{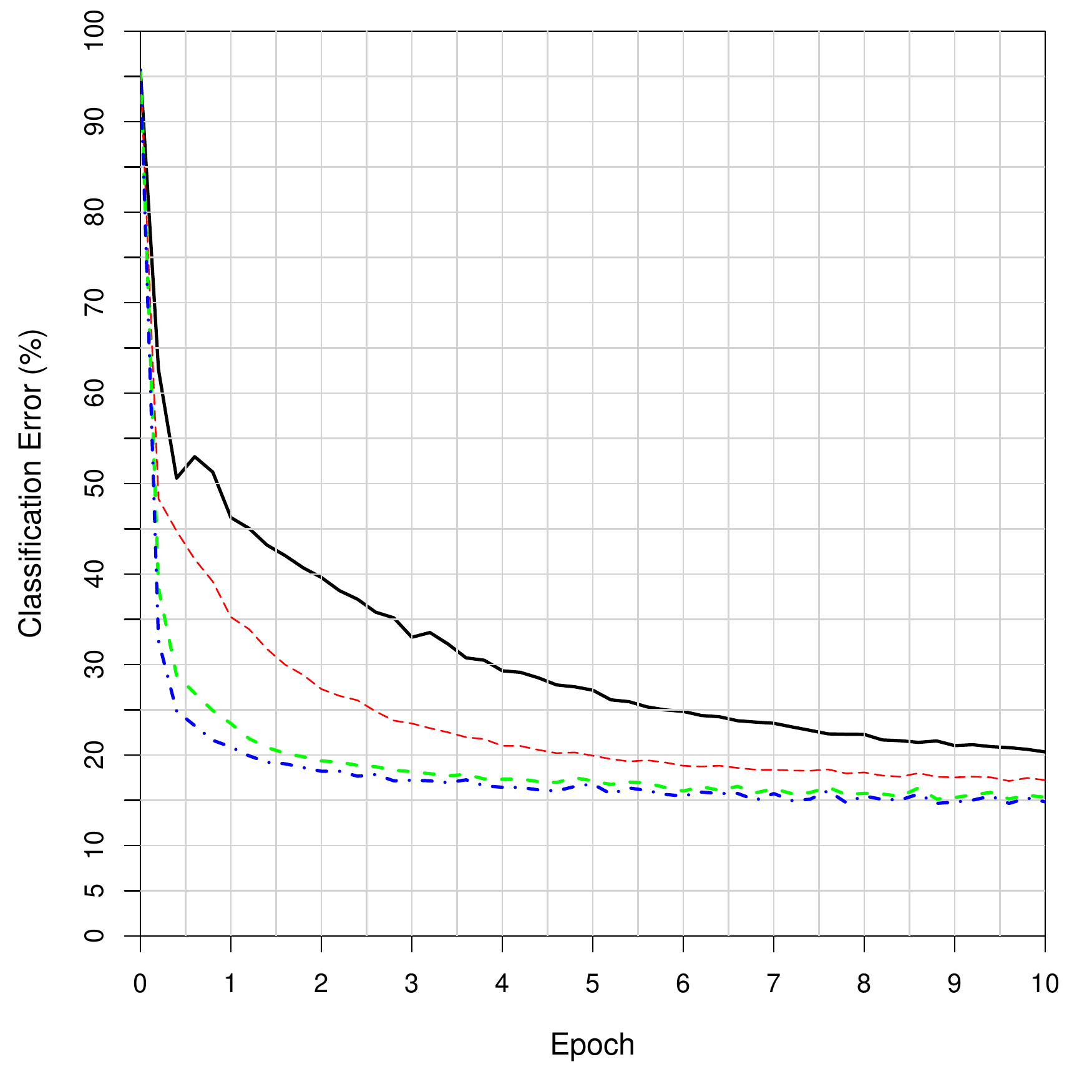}}
\subfigure{\includegraphics[width=110mm]{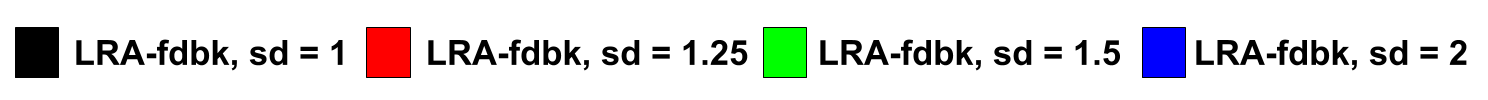}}
\subfigure[10 epochs of \emph{LRA-fdbk} (MNIST).]{\includegraphics[width=63mm]{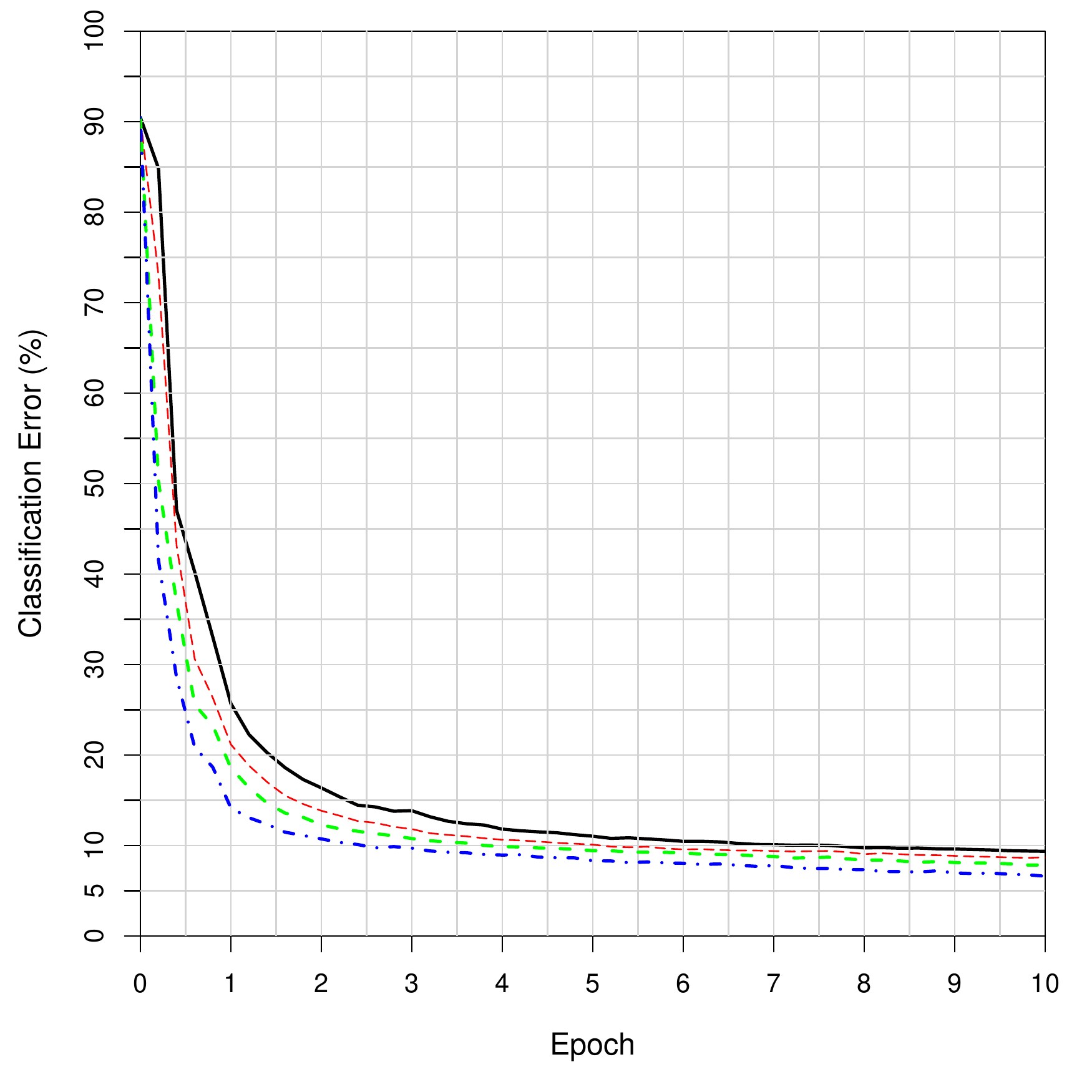}}
\subfigure[10 epochs of \emph{LRA-fdbk}  (Fashion MNIST).]{\label{fig:lra2_fmnist}\includegraphics[width=63mm]{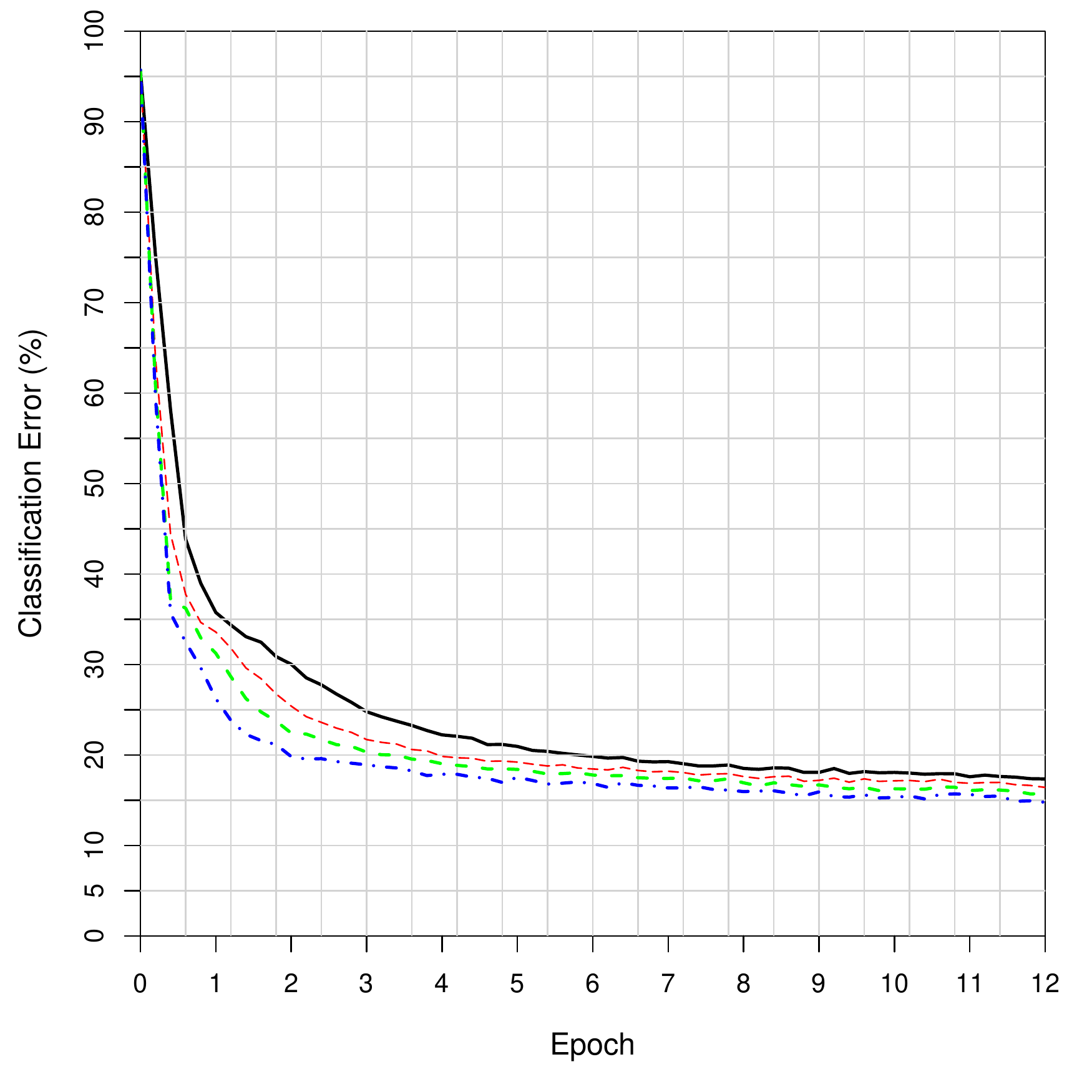}}
\caption{Drop in test-set error during first few epochs of training tanh networks on the MNIST and Fashion MNIST image datasets.}
\label{results:lra_diff_v_fdbk}
\end{figure*}

\begin{figure*}
\centering     
\subfigure[Backprop acquired filters.]{\label{fig:filter_bp}\includegraphics[width=48mm]{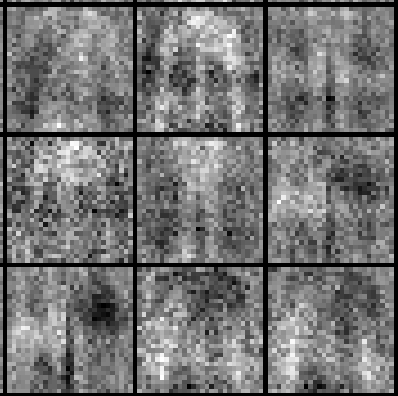}}
\subfigure[LRA-diff acquired filters.]{\label{fig:filter_lra1}\includegraphics[width=48mm]{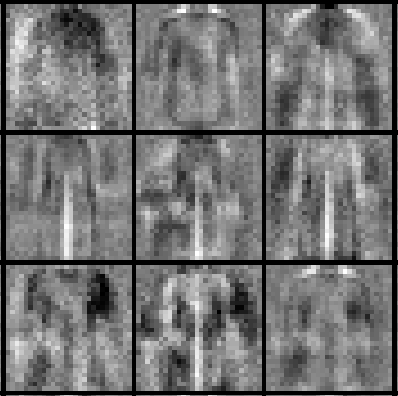}}
\subfigure[LRA-fdbk acquired filters.]{\label{fig:filter_lra2}\includegraphics[width=48mm]{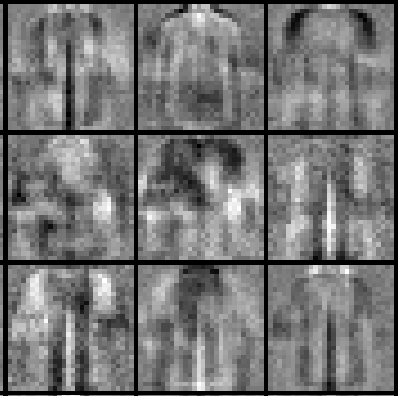}}
\caption{Third-level filters acquired after 1 epoch under each learning algorithm: (a) backprop, (b) LRA-diff, (c) LRA-fdbk.}
\label{results:lra_filters}
\end{figure*}

\paragraph{MNIST:}
\label{data:mnist} 
The popular MNIST dataset \citep{lecun1998gradient} contains $28x28$ grey-scale pixel images, each belonging to one of 10 digit categories. There are 60,000 training images, from which we create a validation subset of 10,000 images, and 10,000 test images. 

\paragraph{Fashion MNIST:}
\label{data:fmnist}
Fashion MNIST \citep{xiao2017fashion} is a dataset composed of $28x28$ grey-scale images of clothing items, meant to serve as a much more difficult drop-in replacement for MNIST itself. The size and structure of the training and testing splits are the same as in MNIST and each image is associated with one of 10 classes. We create a validation set of 10,000 samples from the training split via random sampling without replacement.

\subsection{Effect of Manifold-Walking}
\label{exp1:manifold_walk}
We first investigate how \emph{LRA-diff}'s computation budget $K$, specifically the number of sub-optimization steps allocated per subgraph, affects its ability to train a highly nonlinear network. Furthermore, we contrast this procedure against the vastly simpler error-feedback variant, \emph{LRA-fdbk} (which uses $K=1$ so is also much faster). To do so, we construct networks of three hidden layers of 64 hyperbolic tangent units with biases initialized from zero and weights according to the following classical heuristic:
\begin{align}
W_{ij} \sim U \bigg [ -\frac{1}{\sqrt[]{n_{in}}}, \frac{1}{\sqrt[]{n_{in}}} \bigg ], \label{eq:uniform_fan_in}
\end{align}
noting that $n_{in}$ is the size of previous/incoming layer of post-activities or the number of columns of the weight matrix $W$ (if working in column-major form). $U[-a,a]$ is the uniform distribution in the interval $(-a,a)$.  

For \emph{LRA-diff}, we varied the computation budget $K = \{5, 10, 30, 50\}$, and for \emph{LRA-fdbk}, we fixed $K=1$. The entries of the feedback matrix for LRA-fdbk were generated independently from a Gaussian with standard deviation $\sigma_{E}$. We tried the following settings of $\sigma_{E} = \{1.0, 1.25, 1.5, 2.0\}$ (note in the plots this is denoted \emph{sd}). Both variants of LRA employed the Cauchy loss (see Equation \ref{cauchy_loss}) as the metric for measuring discrepancy between representation and target. Networks were trained over 100 epochs but we only show the first 5 epochs, since roughly after this point, the differences in generalization rates were too similar to warrant visualization. 

In Figure \ref{results:lra_diff_v_fdbk}, for \emph{LRA-diff}, we see that increasing $K$ leads to ultimately better generalization and sooner on both MNIST and Fashion MNIST. However, there is a diminishing return as one dramatically increases the number of steps from $K = 30$ to $K = 50$. As $K$ is the number of iterations of the inner loop in Algorithm \ref{algo:lra}, increasing $K$ leads to significant slowdown. However, more importantly, \emph{LRA-fdbk}, which uses $K=1$, is therefore a far faster variation of LRA and it
reaches the same level of generalization. This means \emph{LRA-fdbk} is able to use the short-circuit feedback connections to create a useful displacement for the model's current input representation to help lower its local loss. 
We found that the initialization of the error feedback weights affects \emph{LRA-fdbk}'s performance, though as one raises the standard deviation, the impact is far less severe than varying the number of steps in \emph{LRA-diff}.\footnote{Choosing a value for the standard deviation that is too low, especially below one, however, can slow down the learning process. We found that naively using a standard deviation of one worked quite well for our preliminary experiments and thus did no further tuning after Experiment \ref{exp1:manifold_walk}.}

In Figure \ref{results:lra_filters}, we show the filters acquired by the feedforward network on Fashion MNIST after a single epoch. To create these filter visualizations, we employ the feature activation maximization approach as presented by \cite{erhan2010understanding}. Furthermore, while this approach generally only applies to the first hidden layer of units, which sit closest to the input pixel nodes, we can apply the same technique to the upper hidden layers of the network, such as the third layer, by simply ignoring the nonlinearity at each level of the model. Thus, we approximately ``linearize'' the nonlinear network which allows us to collapse successive weight matrices back into a single matrix (taking advantage of this natural property of deep linear networks). This will incur some minor approximation error, since the network is not truly linear, but we found that this approximation gave us a very fast and reasonably good picture of what knowledge might be captured in the synaptic connections that form the memories of the upper layer nodes, i.e., those closest to the output layer.
Observe that both versions of LRA (note that for \emph{LRA-diff} we use the network trained with $K = 30$) learn reasonably good and clear filters after just one pass through the data. Back-propagation, however, learns far noisier filters. Again, it is surprising to see that \emph{LRA-fdbk} learn so well with only one pass through the data, given that it is far cheaper computationally than \emph{LRA-diff}. Encouraged by this positive behavior, its low computation requirements, and the fact that \emph{LRA-fdbk} can also handle a far greater range of activations, such as discrete-valued and stochastic ones, we focus on \emph{LRA-fdbk} (with $K=1$, so it only makes once pass through its inner loop, making its computational requirements comparable to back-prop) for the rest of the paper.
 
\begin{figure*}
\centering     
\subfigure{\includegraphics[width=50mm]{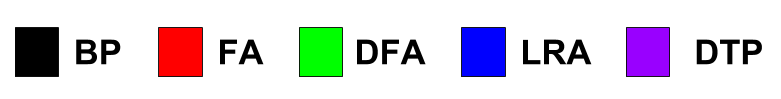}} \\
\subfigure[MNIST, sigmoid network.]{\includegraphics[width=75mm]{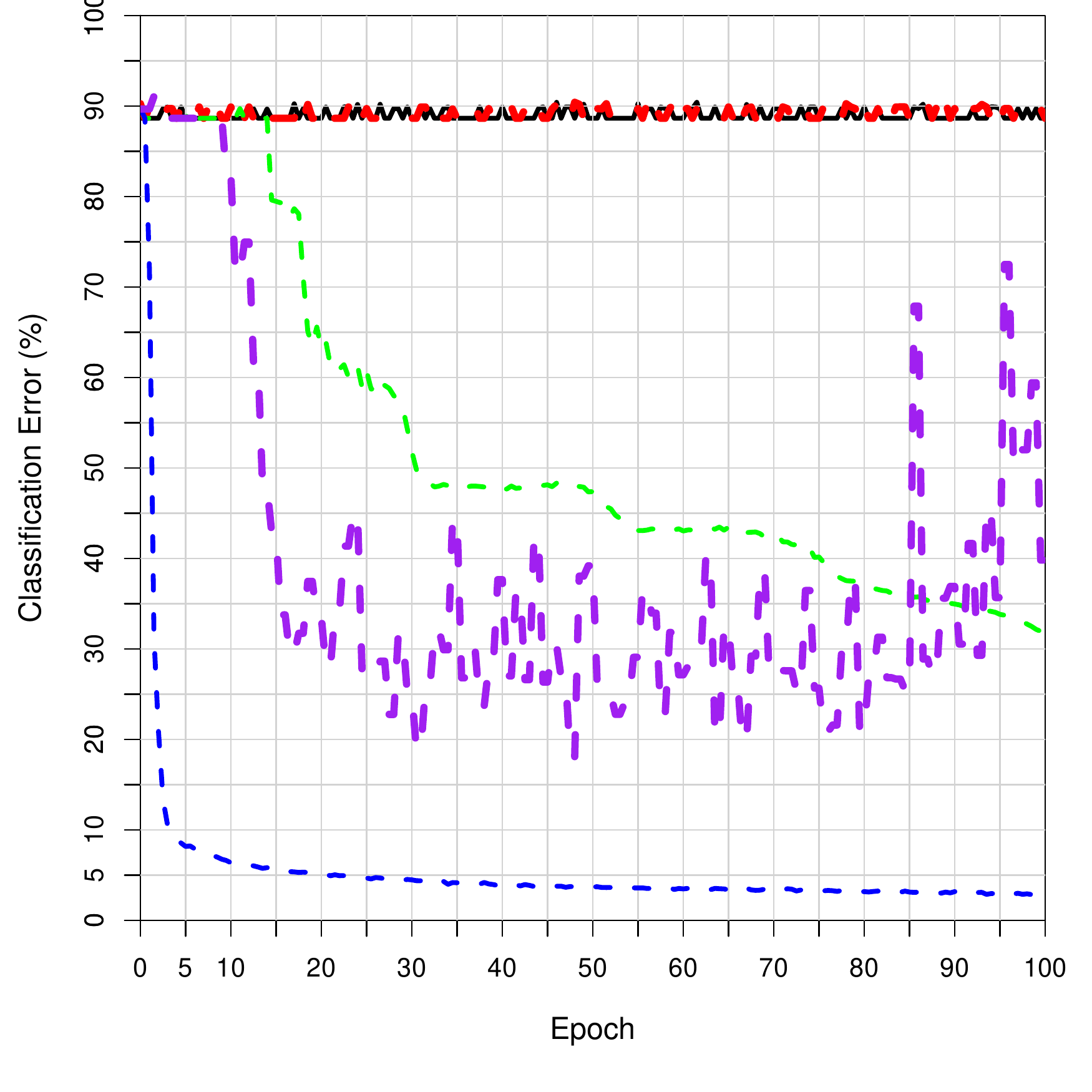}}
\subfigure[Fashion MNIST, sigmoid network.]{\includegraphics[width=75mm]{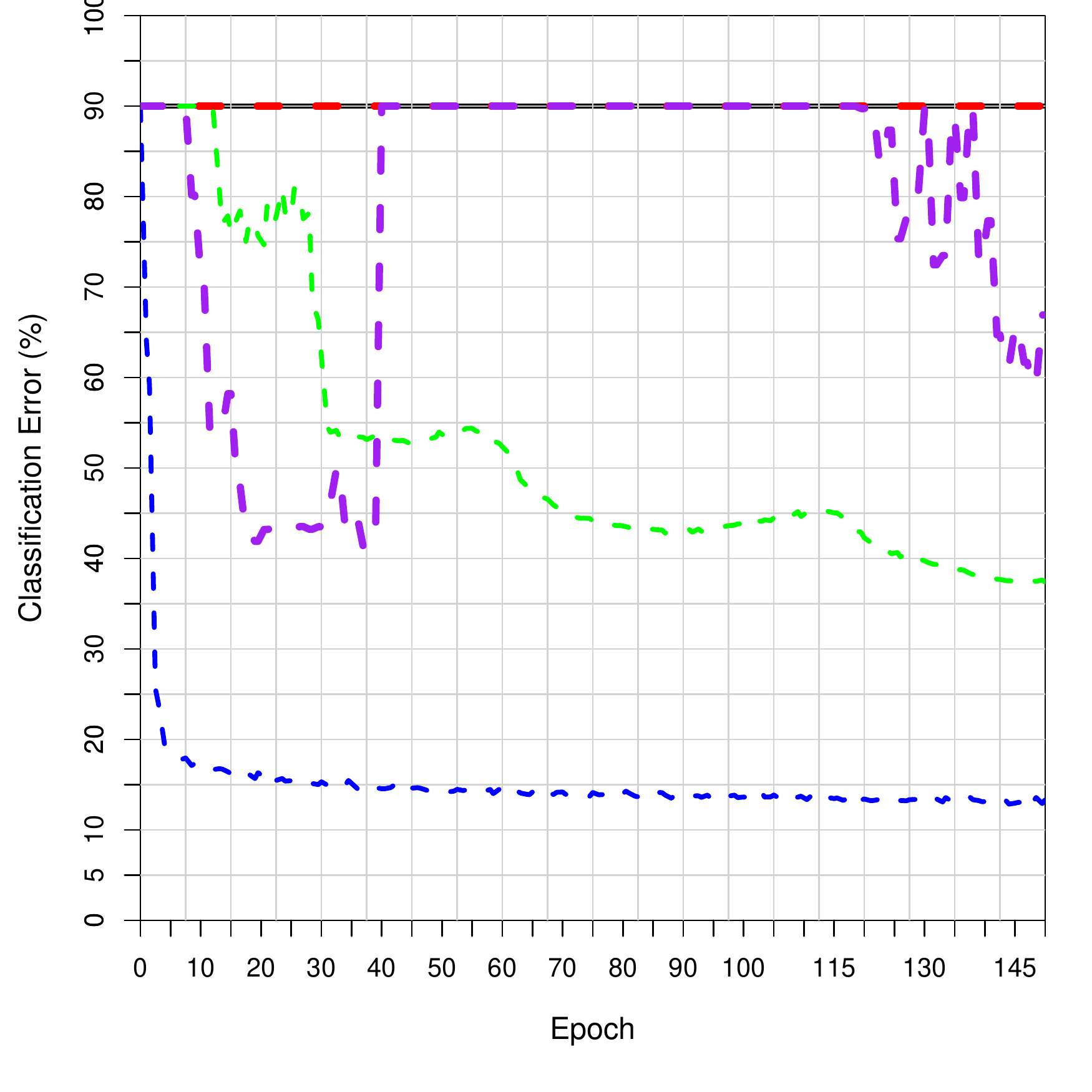}}
\subfigure[MNIST, tanh network.]{\includegraphics[width=75mm]{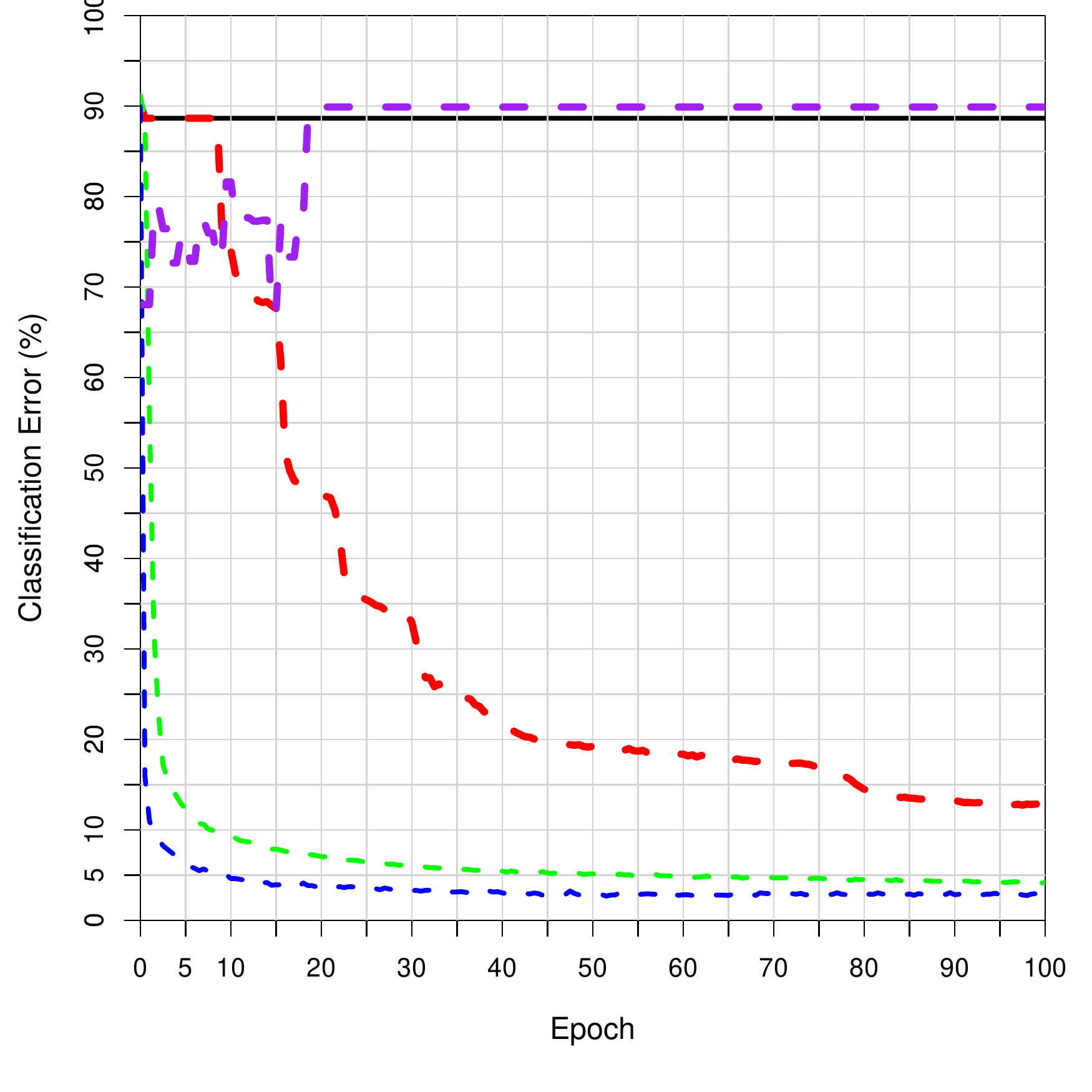}}
\subfigure[Fashion MNIST, tanh network.]{\includegraphics[width=75mm]{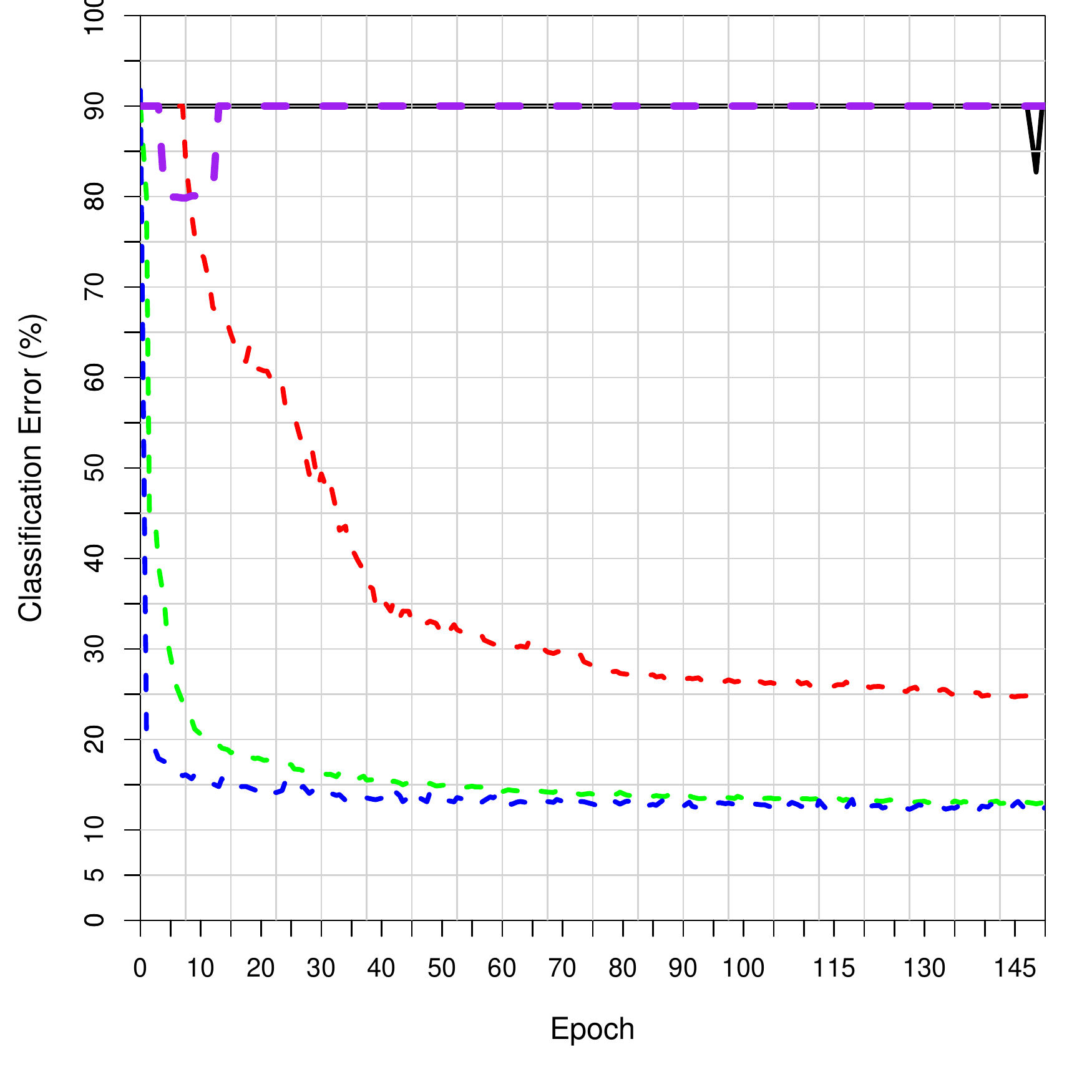}}
\caption{Generalization of various learning algorithms used to train deep and thin networks of either sigmoid (top) or tanh (bottom) units on MNIST (left) and Fashion MNIST (right). Initial weights were sampled: $w_{i,j} \sim \mathcal{N}(\mu = 0,\,\sigma^{2} = 0.025)$. Note that DTP, FA, and BP often fail or are unstable with this initialization, especially when training deep sigmoidal networks on Fashion MNIST.}
\label{results:lra_generalization}
\end{figure*}

\begin{figure}[t]
\centering     
\subfigure{\includegraphics[width=60mm]{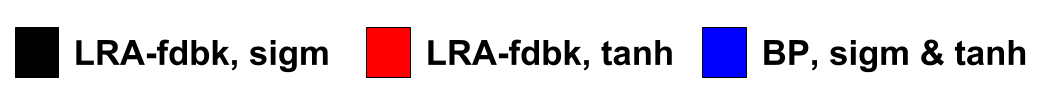}}\\
\subfigure[MNIST mean depth.]{\includegraphics[width=60mm]{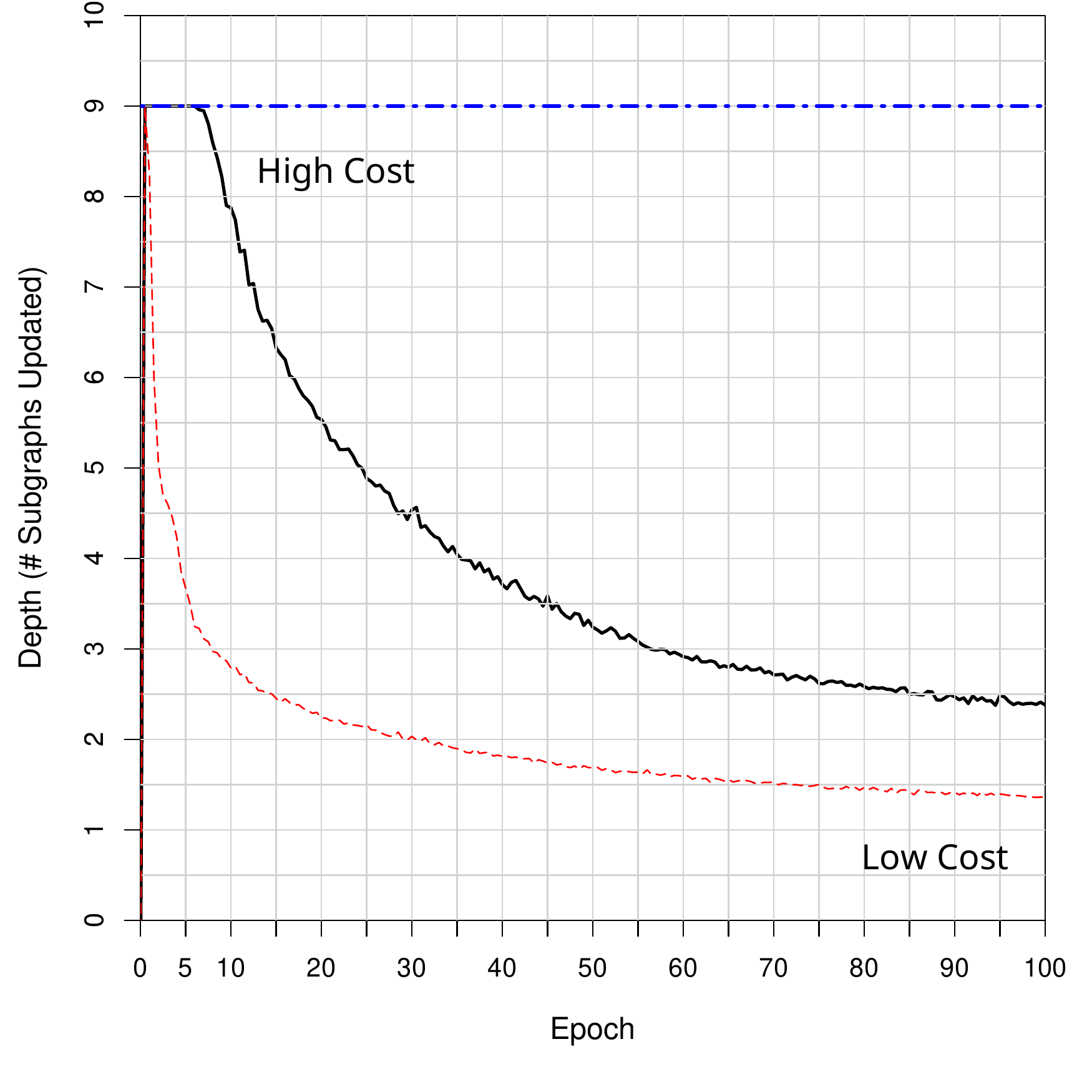}}
\subfigure[Fashion MNIST mean depth.]
{\includegraphics[width=60mm]{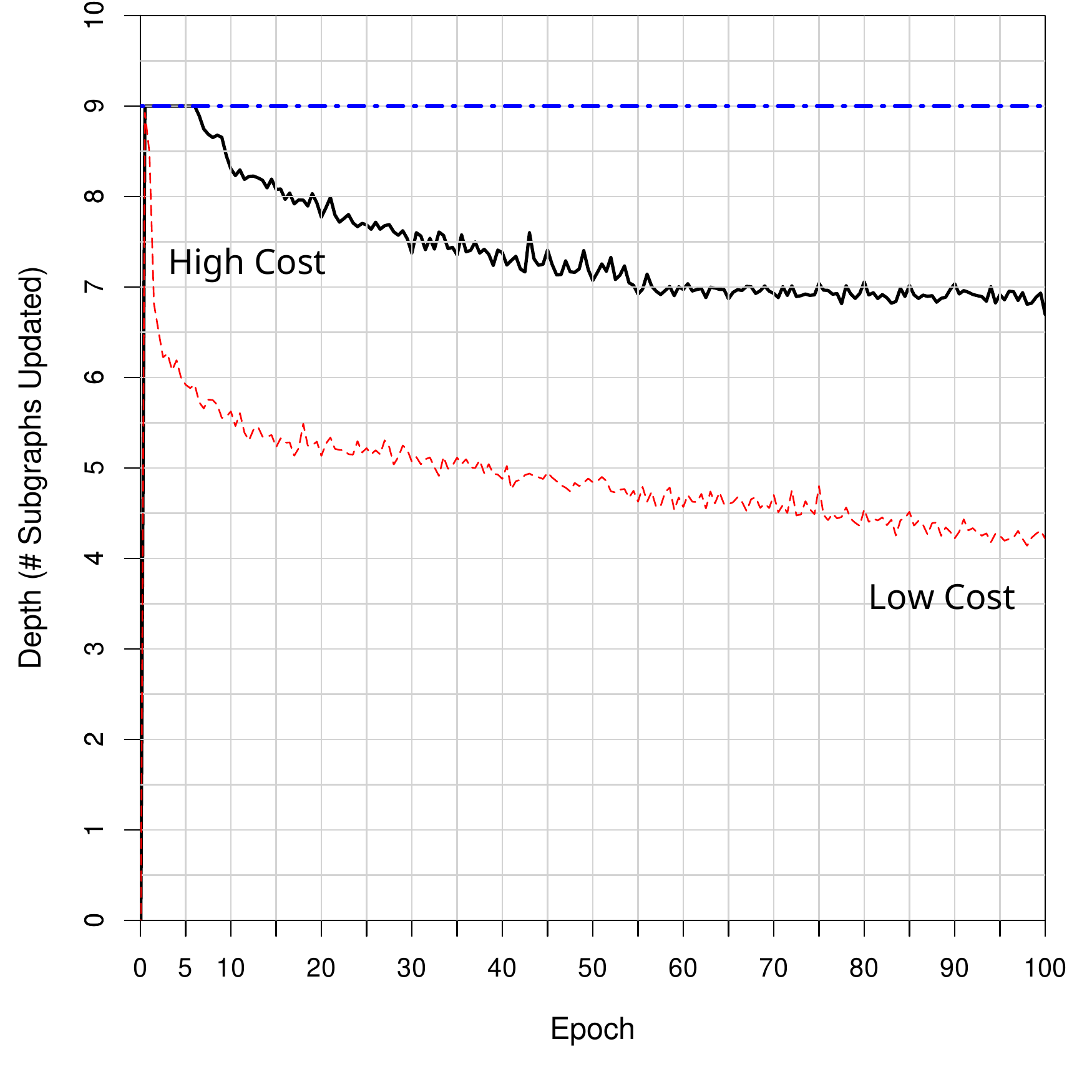}}
\caption{Average depth of the credit assignment carried out by LRA on w.r.t. deep sigmoidal and tanh networks (if the backwards pass is stopped whenever the feed-forward activations are similar to the targets), compared against backprop's fixed depth.}
\label{results:lra_depth}
\vspace{-0.5in}
\end{figure}

\begin{table}[t]
\begin{center}
\caption{Robustness to poor initialization. Training and generalization error (\%) of deep sigmoid or tanh networks, initialized with 0-mean Gaussians with std of $0$, $0.025$, $0.05$, and $0.1$}
\label{results:lra_deep_nets}
\footnotesize
\begin{tabular}{|c||c|c|c|c|c|c|c|c|}
\hline
  & \multicolumn{8}{c|}{\textbf{MNIST}}  \\
  & \multicolumn{4}{c|}{$\phi(\cdot) =$ \textbf{Sigmoid}} & \multicolumn{4}{c|}{$\phi(\cdot) =$ \textbf{Tanh}}  \\
  \textbf{Algorithm} & \textbf{std}: $0$ & $0.025$ & $0.05$ & $0.1$ & $0$ & $0.025$ & $0.05$ & $0.1$ \\
  \hline
  Backprop & X & 88.66 & 88.66 & 88.66 & X & 88.65 & 2.67 & 2.48 \\
  FA &  88.65 & 88.65 & 88.65 &  88.65 & 16.1 & 12.73 & 11.86 & 11.81 \\
  DFA & 30.84 & 31.78 & 31.77 & 26.81 & 5.24 & 4.11 & 5.12 & 4.9 \\
  DTP & X & 18.06 & 16.83 & 17.06 & X & 67.51 & 5.38 & 3.19  \\
  LRA-fdbk & 2.75 & 2.85 & 2.89 & 2.96 & 2.69 & 2.87 & 2.82 & 3.72 \\
  \hline
\end{tabular}

\begin{tabular}{|c||c|c|c|c|c|c|c|c|}
\hline
  & \multicolumn{8}{c|}{\textbf{Fashion MNIST}}  \\
  & \multicolumn{4}{c|}{$\phi(\cdot) =$ \textbf{Sigmoid}} & \multicolumn{4}{c|}{$\phi(\cdot) =$ \textbf{Tanh}}  \\
  \textbf{Algorithm} & \textbf{std}: $0$ & $0.025$ & $0.05$ & $0.1$ & $0$ & $0.025$ & $0.05$ & $0.1$ \\
  \hline
  Backprop & X & 90.00 & 90.00 & 90.00 & X & 29.04 & 11.48 & 11.38 \\
  FA & 90.00 & 90.00 & 90.00 & 89.97 & 24.82 & 24.57 & 21.67 & 20.01  \\
  DFA & 35.91 & 37.4 & 41.62 & 37.7 & 13.92 & 12.82 & 13.95 & 14.13  \\
  DTP & X & 41.1 & 26.26 & 24.69 & X & 79.84 & 60.19 & 13.59 \\
  LRA-fdbk & 13.27 & 12.85 & 12.94 & 12.96 & 11.85 & 12.14 & 12.62 & 12.65 \\
  \hline
\end{tabular}
\end{center}
\end{table}

\subsection{Robustness to Initialization}
\label{exp2a:init}
It is very difficult to train deep (and thin) networks from simplistic initialization schemes \citep{romero2014fitnets}. Furthermore, \cite{lecun1998efficient} showed that using the logistic sigmoid as an activation function can slow down learning considerably, largely due to its non-zero mean, which was further investigated by \cite{glorot2010understanding}. Given the problems that come with unit saturation and vanishing gradients \citep{bengio1994learning}, training a very deep and thin network, especially composed of logistic sigmoid units, with only back-propagation, can be very difficult.

To investigate LRA's robustness to poor initialization, we use it and competing methods to train deep nonlinear networks consisting of either logistic sigmoid or hyperbolic tangent activation functions with model parameters (i.e. network weights) initialized from a parametrized, zero-mean Gaussian distribution. The Gaussian distribution is a very simple, common way to initialize the parameters of a neural model, and controlling its standard deviation, $\sigma$, allows us to probe different cases when back-propagation fails. In this experiment, we investigate the settings $\sigma = \{ 0.025, 0.05, 0.1 \}$, and compare LRA and backprop (\emph{BP}) to algorithms such as Difference Target Propagation (\emph{DTP}) from \cite{lee2015targetprop}, Feedback Alignment (\emph{FA}) from \cite{lillicrap2016random}, and Direct Feedback Alignment (\emph{DFA}) from \cite{nokland2016direct}. Furthermore, we also show the situation where the weights are simply initialized to zero. Whenever it was not possible to learn from zero with a given algorithm, such as BP and DTP, we simply marked the appropriate slot with an $X$. To initialize the feedback weights of DFA and FA, we follow the protocol prescribed by \cite{nokland2016direct}.

The network architecture each algorithm is responsible for training is the same: a multilayer perceptron containing eight hidden layers of 128 processing elements. We examine the ability of each algorithm to train the same architecture employing logistic sigmoid (a non-zero mean activation function) and the hyperbolic tangent (a zero-mean activation function).

In Table \ref{results:lra_deep_nets}, we present the best found generalization error rate for the deep architecture learned with each algorithm under each initialization setting. Observe that \emph{LRA-fdbk} is rather robust to the initializations scheme, and more importantly, is able to train to good generalization regardless of which unit type is used. Furthermore, even at initializations close to or at zero, LRA-fdbk is able to train deep networks of both logistic sigmoid and hyperbolic tangent networks.  In Figure \ref{results:lra_generalization}, we focus on the setting with $\sigma = 0.025$ for the Gaussian distribution used to initialize the networks (i.e. the lowest non-zero setting). Observe that despite poor initialization, even within 10 epochs, \emph{LRA-fdbk} is able to consistently reach good classification error ($\approx 5\%$ on MNIST and $< 20\%$ on Fashion MNIST), while the other methods struggle to reach those numbers even after 100 epochs. DFA is competitive with \emph{LRA-fdbk} when using hyperbolic tangent units but trains the same network composed of sigmoid units poorly.

According to our results, for DTP, the inverse mapping used to reconstruct the underlying layerwise targets does not work all that well when weights are initialized from a purely random Gaussian, especially with a low standard deviation. As observed in Table \ref{results:lra_deep_nets}, DTP struggles to train these deep networks, even when given the advantage and allowed to use an adaptive learning rate unlike the other algorithms. DTP's struggle might be the result of losing too much of its layerwise target information too soon--the inverse mapping (or decoder of the layerwise auto-associative structure) requires a strong signal at each iteration to learn and if the signal is too weak or lost, the target produced for reconstruction becomes rather useless. However, DTP does far better than FA across the scenarios, although its lacks the ability to train from zero initialization. Our preliminary experimentation with DTP also uncovered that, in addition to requiring a more complex outer optimization procedure (like RMSprop) to achieve decent results, the learning procedure is highly dependent on its conditions and internal hyper-parameter settings (and there exist few heuristics on good starting points). To make DTP work well, significant tweaking of its settings would be required on a per-dataset/per-architecture basis in order to improve targets for the inverse mapping. Since the error of DTP (or target propagation algorithms in general) is represented as the change in activities of the same set of neurons, if any neural activity is unstable, the overall algorithm will fail to train the underlying model effectively. Furthermore, because of the extra computation involved in DTP, it is also far slower than LRA.

With respect to DFA, the layers are no longer related through a sequential backward pathway. This means that the lower-level neurons are disconnected from the forward propagation pathway when errors are calculated using the feedback projection weights. In contrast, we find that in FA the error signal is still created by a backward pass as in BP, but this time with the final per-neuron derivatives approximated by the feedback weights that replace the transpose of the forward weights in the BP global feedback pathway. Hence FA fails in cases where we have instability or few gradients are acting on participating neurons. DFA actually works fairly well compared to the other baselines if the activation function is the hyperbolic tangent, and does outperform FA when the logistic sigmoid is utilized. Furthermore, unlike DTP and BP, DFA and FA can train networks from zero, although LRA does a much better job.

Another interesting and important property of \emph{LRA-fdbk} is that, unlike all of the other approaches investigated, it can \textbf{automatically decide its depth of credit assignment}. Specifically, in the case of the MLP, \emph{LRA-fdbk} can decide how many subgraphs it needs to update. This is possible because of the condition in the while loop that stops the backwards pass if the feedforward activations are already similar to the targets (i.e. have a local loss at most $\epsilon$). This condition can be used at the mini-batch level or at the per-sample level. 
In Figure \ref{results:lra_depth}, we observe this dynamic behavior when recording the mean depth (or average number of subgraphs updated over the full training set in one pass), seeing that the network starts, within the first several epochs, by updating all of the subgraphs of the network. However, as learning continues, usually past five epochs, we see the number of subgraphs updated decrease, and, in the case of the tanh networks, approach one or zero. While not presented in the depth plots, we also recorded the average number of updates made at each layer. These logs revealed that, around the same time the average depth approaches one, even updates at the very top subgraph become less frequent. This aligns with our intuition that once latent representations of a lower layer are ``good enough'', LRA can quit expending computation on that layer, on a per-sample basis. No other algorithm, including BP, has the property to adapt its computation when calculating parameter displacements (which is also why BP is depicted as horizontal line in Figure \ref{results:lra_depth}--its cost is the same over each epoch).

\subsection{Training from Null Initialization}
\label{exp2b:null_init}
Next, we further experiment with LRA's ability to train networks from null, or pure zero, initialization. While BP and DTP will fail in this setting,
DFA and FA will not. However, in order for DFA and FA to operate in this special setting certain restrictions are needed, e.g. the activation function must be specific like the hyperbolic tangent \cite{nokland2016direct} and non-zero initialization must be used for certain activations including the linear rectifier. LRA does not impose these restrictions, and furthermore, can easily handle non-differentiable operations, e.g., the signum function, as we shall observe shortly. 

\begin{table*}[t]
\begin{center}
\caption{Generalization error (\%) of various networks trained with LRA-fdbk, from null initialization. We also report, next to each ``best error'', the end of which epoch the model reached this level of generalization. Note that, for the output of the internal subgraphs, \emph{LWTA}-1 means tanh was used and \emph{LWTA}-2 means hard-tanh was used.}
\label{results:lra_nets}
\footnotesize
\begin{tabular}{|c||c|c|c|c||c|c|c|c|}
\hline
  & \multicolumn{4}{c||}{\textbf{MNIST}} & \multicolumn{4}{|c|}{\textbf{Fashion MNIST}} \\
  $\phi(\cdot)$ & \textbf{Ep. 1} & \textbf{Ep. 50} & \textbf{Ep. 100} & \textbf{Best} & \textbf{Ep. 1} & \textbf{Ep. 50} & \textbf{Ep. 100} & \textbf{Best} \\
  \hline
  LRA-\emph{Softsign} & 5.94 & 2.13 & 2.32 & 1.86 (25) & 17.22 & 13.47 & 13.2 & 12.74 (30) \\
  LRA-\emph{Relu6} & 8.6 & 3.14 & 2.58 & 2.18 (69) & 19.7 & 11.52 & 12.46 & 11.18 (98) \\
  LRA-\emph{Softplus} & 14.94 & 4.1 & 3.53 & 2.96 (94) & 31.35 & 15.94 & 15.46 & 13.37 (79) \\
   LRA-\emph{LWTA}-1 & 7.02 & 2.96 & 2.68 & 2.37 (67) & 18.25 & 12.72 & 12.52 & 12.33 (72) \\
   LRA-\emph{LWTA}-2 & 6.96 & 2.57 & 2.88 & 2.48 (44) & 17.89 & 12.85 & 12.76 & 12.05 (54) \\
   LRA-\emph{SLWTA}-1 & 9.15 & 2.1 & 2.24 & 1.78 (33) & 20.15 & 11.78 & 11.47 & 11.03 (96) \\
   LRA-\emph{SLWTA}-2 & 8.99 & 2.21 & 2.1 & 1.91 (33) & 20.00 & 11.74 & 11.58 & 11.29 (88) \\
   LRA-\emph{Signum} & 6.88 & 2.09 & 2.07 & 1.84 (45) & 19.23 & 12.48 & 13.06 & 12.06 (65) \\
  \hline
\end{tabular}
\end{center}
\end{table*}

To demonstrate LRA's ability to handle a wide variety of functions, we train models of 3 layers of 800 hidden units with updates estimated over mini-batches of 20 samples. Parameters were updated using the Adam  adaptive learning rate \citep{kingma2014adam}. The activation functions we experimented with included the softsign \citep{glorot2010understanding}, the softplus \citep{glorot2011deep}, the linear rectifier \citep{glorot2011deep}, local-winner-take-all (LWTA) lateral competition \citep{srivastava2013compete}, and the signum (or sign).

For the networks that used LWTA and signum units, the architecture for any particular subgraph of the MLP, except the bottommost and topmost subgraphs, is defined as follows (essentially decomposing the activation into two parts, as discussed in Section \ref{non_diff_fun}):
\begin{align}
\mathbf{z}^{\ell-1} = f_{\ell-1}( \mathbf{h}^{\ell-1} ), \quad  
\mathbf{h}^\ell = W_\ell f^d_{\ell-1}( \mathbf{z}^{\ell-1} ) + \mathbf{b}_\ell, \quad \mbox{and} \quad \mathbf{z}^\ell = f_\ell( \mathbf{h}^\ell )
\end{align}
where $f^d_{\ell}$ is a discretization function, e.g., signum or Heaviside step, or lateral competition activation, e.g., LWTA\footnote{Note that the signum is specifically defined as: $sign(v) = \{1 \mbox{ if } v \geq 0 \mbox{ and } 0 \mbox{ if } v < 0\}$.} and $f_\ell$ depends on the network: When using the subgraph above for signum, we set $f_{\ell}$ to be the hyperbolic tangent and in the case of LWTA units, we experimented with both the hyperbolic tangent -- \emph{LWTA-1} -- and the hard hyperbolic tangent \citep{gulcehre2016noisy} -- \emph{LWTA-2}. For the LWTA units, we follow and design lateral competition blocks following the same convention discussed by \cite{srivastava2013compete}. Specifically, any processing element $z_j$, in block $i$ of $n$ units (in layer $\ell$), is defined by the hard interaction function below:
\begin{align}
  z^i_j = 
    \begin{cases} 
      z^i_j & \mbox{if} \ z^i_j \geq z^i_k \ \forall k = 1..n \\
      0 & \mbox{otherwise.}
    \end{cases}
\end{align}
In this experiment, the LWTA blocks we employed grouped four neurons together, with no overlap, yielding 200 blocks of laterally competitive neurons. Index precedence is used to break any ties. This form of structured sparsity through competition has also been observed in a biological neural circuits when modeling brain processes. Specifically, areas of the brain exhibit are structured with neurons providing excitatory feedback to nearby neurons, as evidenced in studies of cortical and sub-cortical regions of the brain \citep{stefanis1969interneuronal,andersen1969participation,eccles2013cerebellum}. The concept of inter-neuronal competition also plays a key role in Bayesian theories of the brain, specifically those that build on (sparse) predictive coding \citep{olshausen1997sparse,rao1999predictive}, which argue that lateral competition allow the underlying system to uncover the few causal factors, out of the many possible, that explain a given input stimulus at any time step. From a practical machine learning perspective, sparsity is highly desirable for a wide variety of reasons \citep{glorot2011deep}, and has recently been shown to be useful in training directed neural generative models of sequential data \citep{ororbia2017learning}.

\begin{figure*}
\centering     
\subfigure[First hidden layer (BP).]{\label{fig:bp_tsne_z1}\includegraphics[width=48mm]{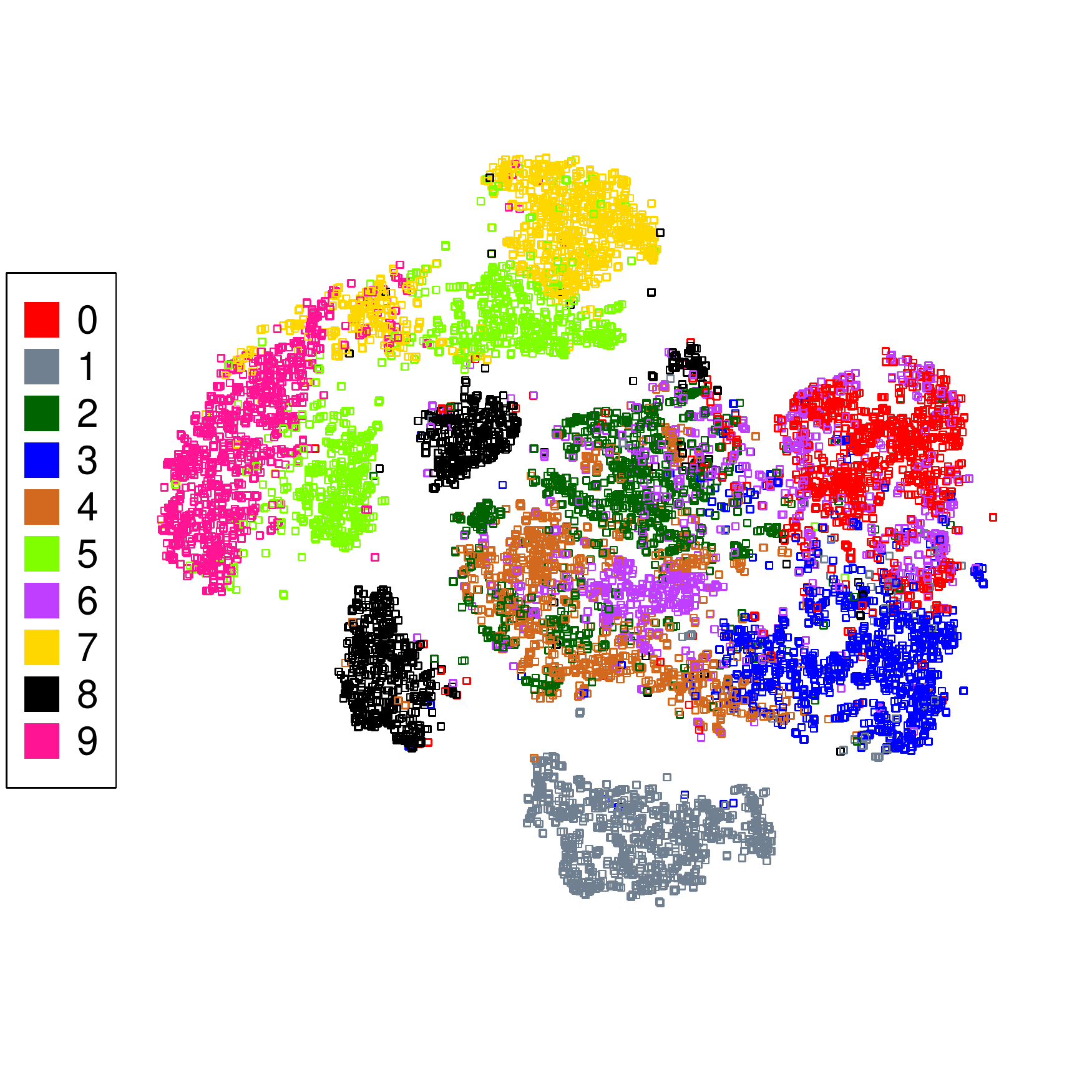}}
\subfigure[Second hidden layer (BP).]{\label{fig:bp_tsne_z2}\includegraphics[width=48mm]{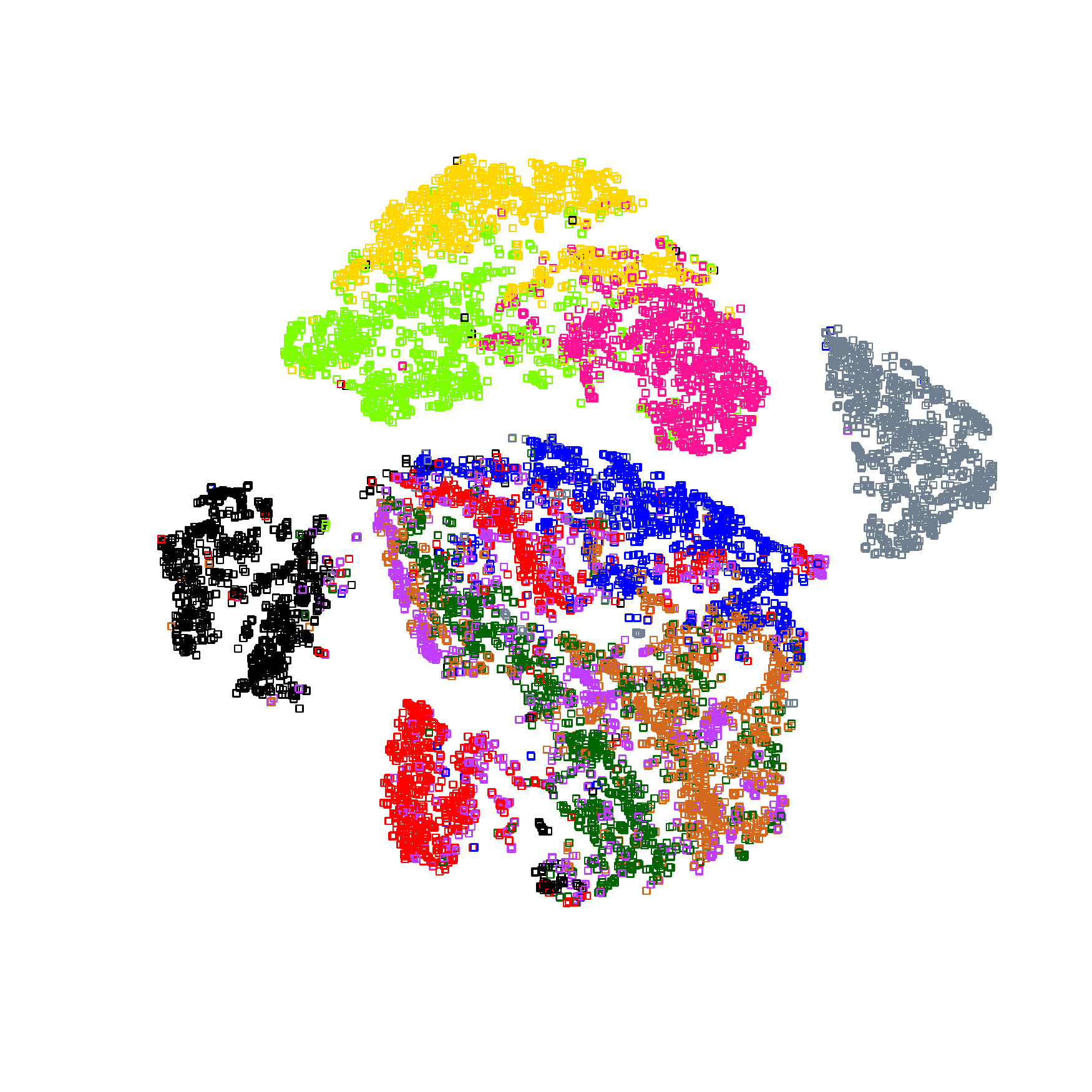}}
\subfigure[Third hidden layer (BP).]{\label{fig:bp_tsne_z3}\includegraphics[width=48mm]{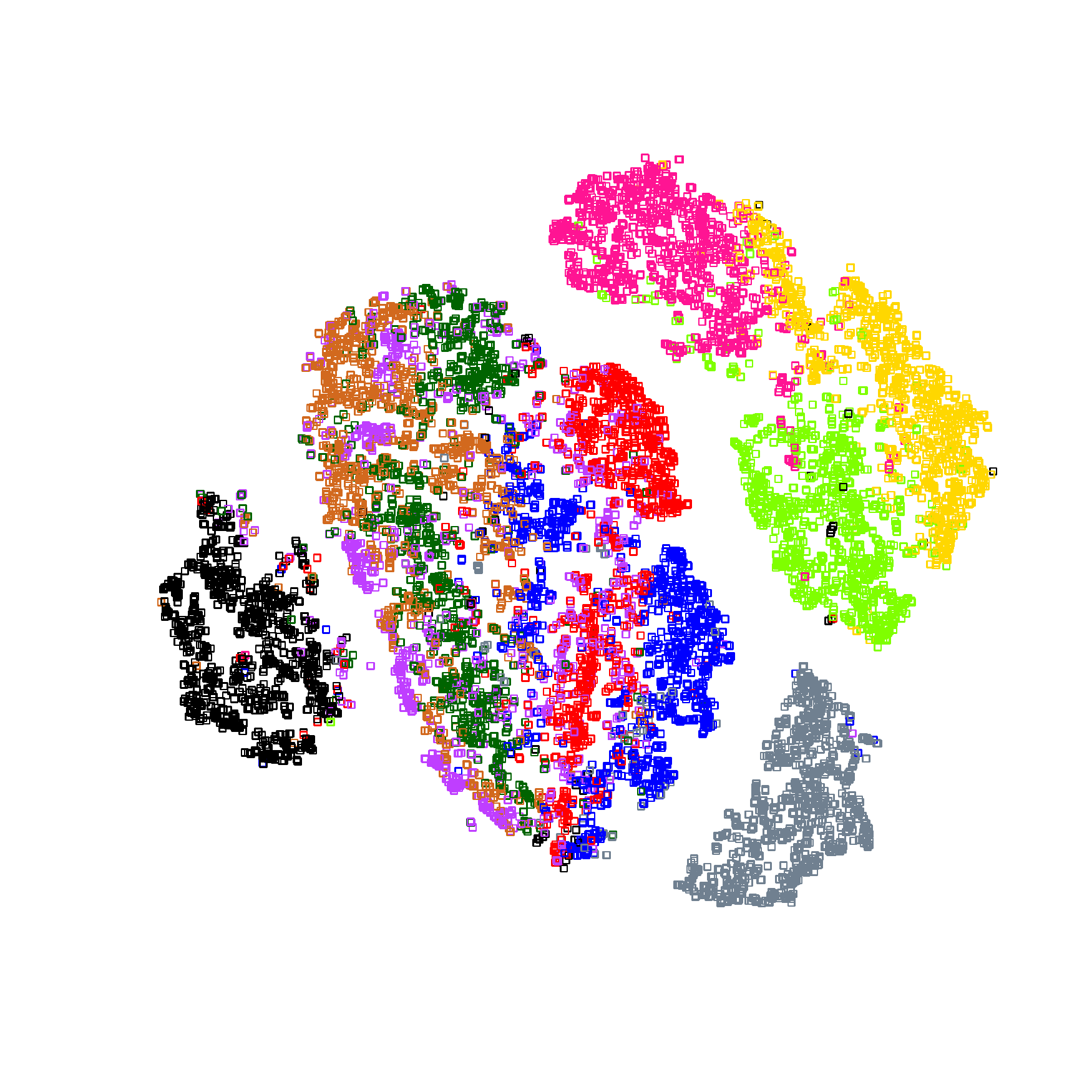}}

\subfigure[First hidden layer (DFA).]{\label{fig:dfa_tsne_z1}\includegraphics[width=48mm]{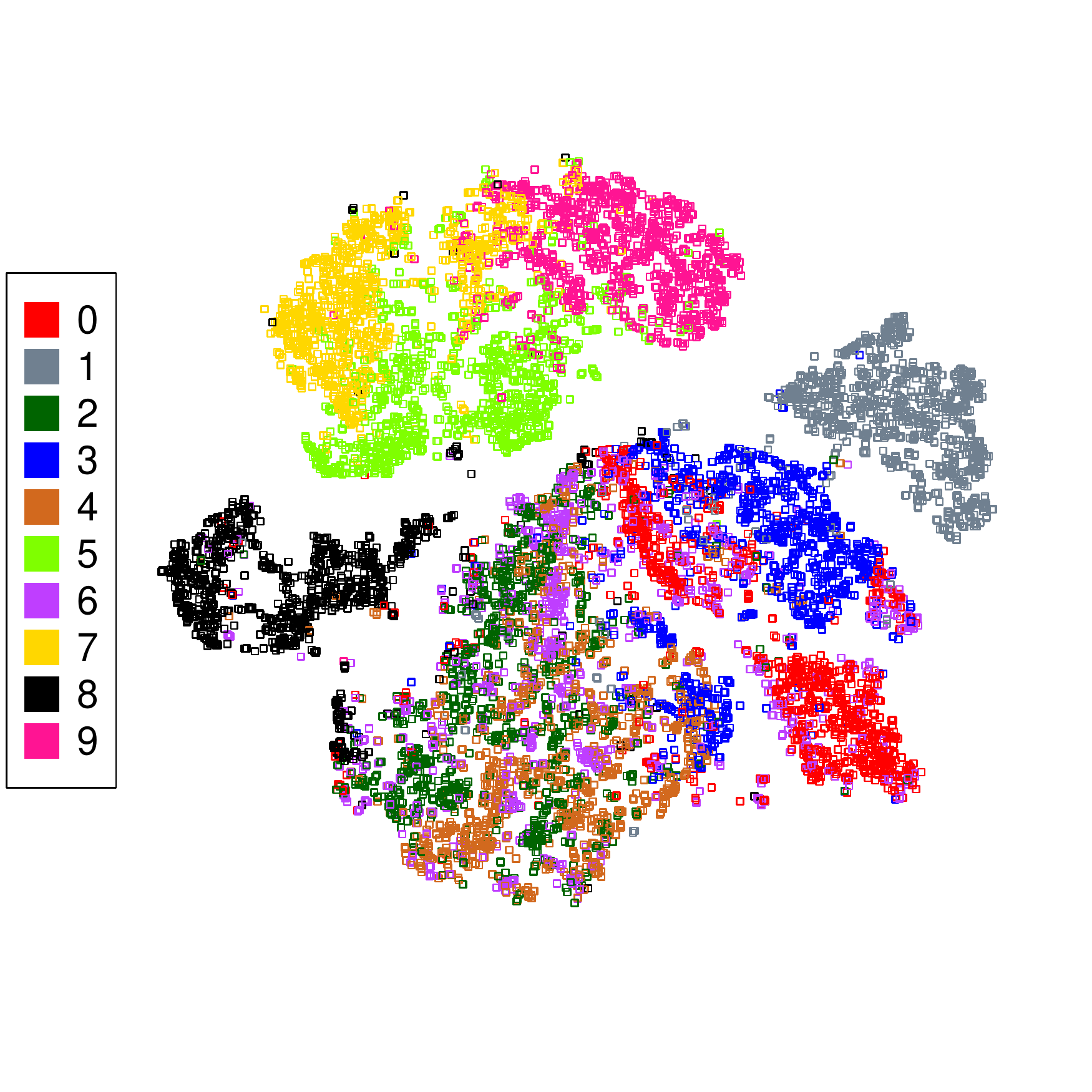}}
\subfigure[Second hidden layer (DFA).]{\label{fig:dfa_tsne_z2}\includegraphics[width=48mm]{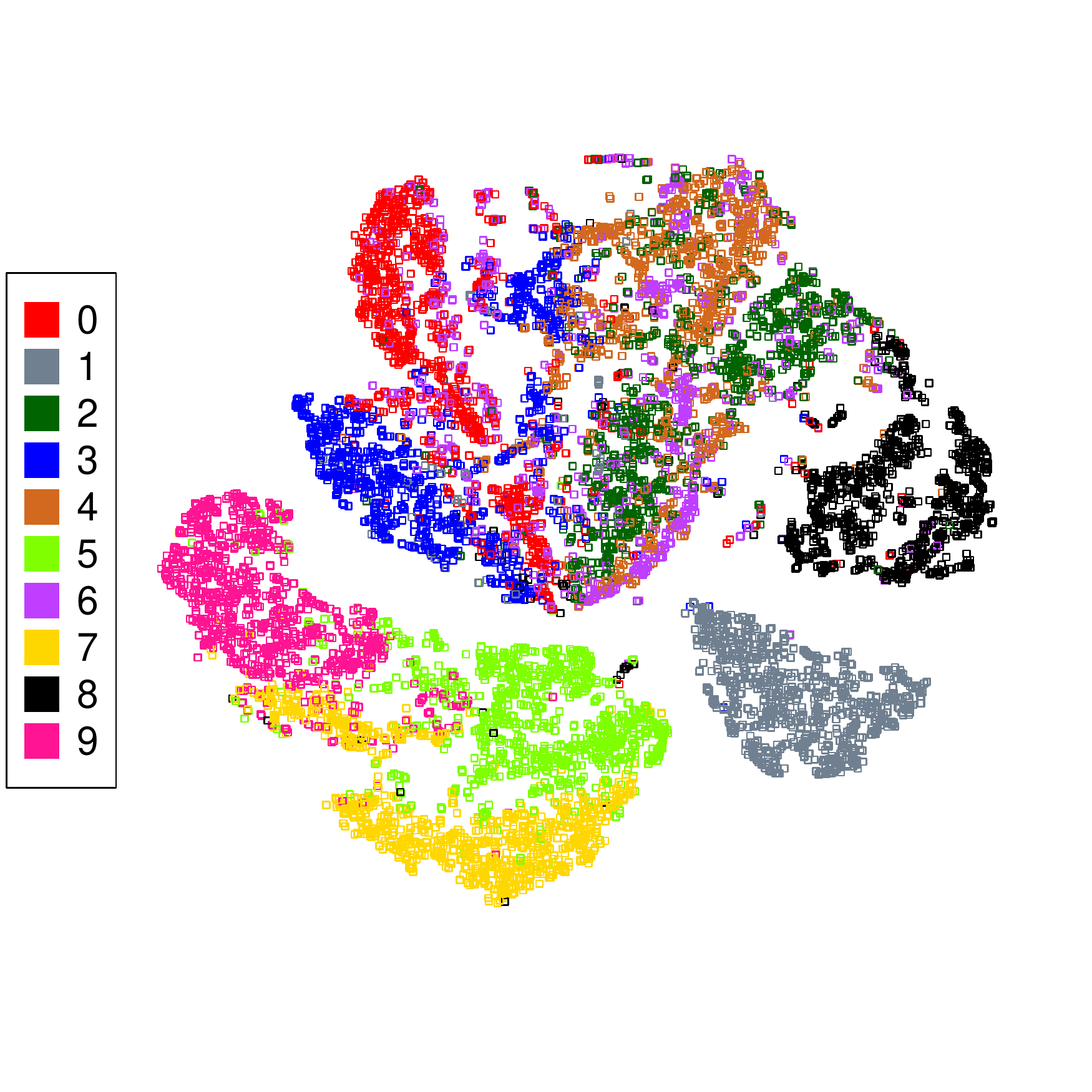}}
\subfigure[Third hidden layer (DFA).]{\label{fig:dfa_tsne_z3}\includegraphics[width=48mm]{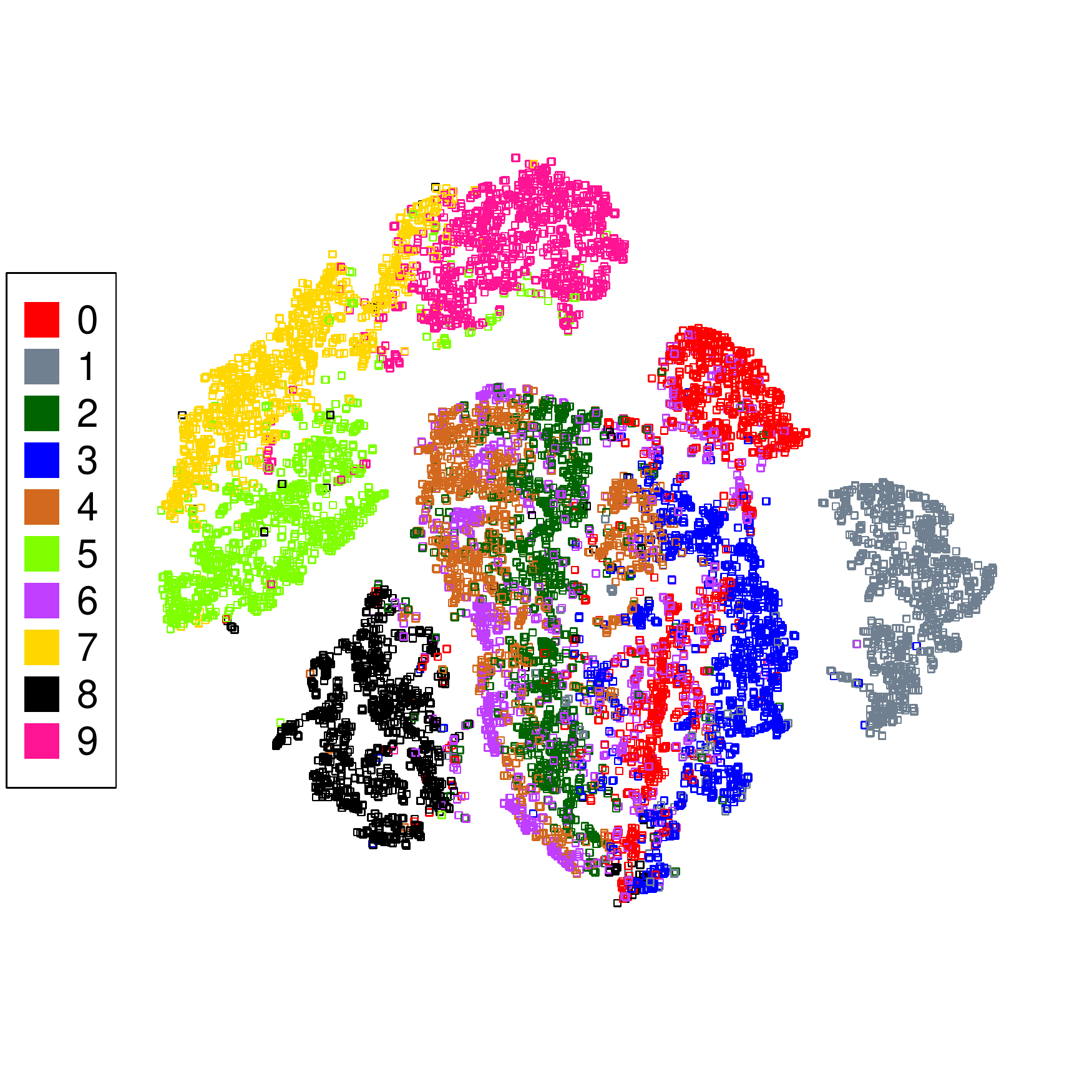}}

\subfigure[First hidden layer (LRA).]{\label{fig:lra_tsne_z1}\includegraphics[width=48mm]{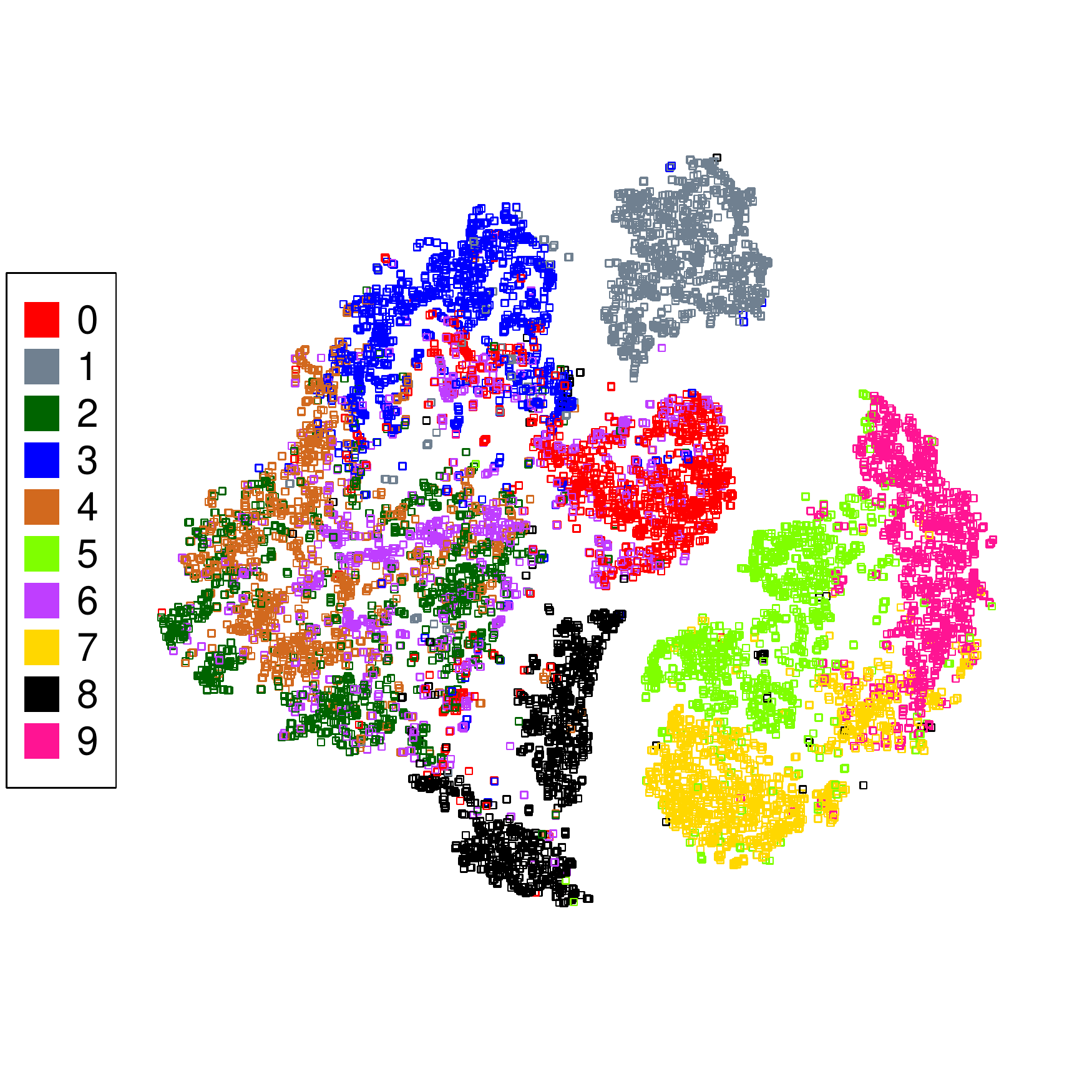}}
\subfigure[Second hidden layer (LRA).]{\label{fig:lra_tsne_z2}\includegraphics[width=48mm]{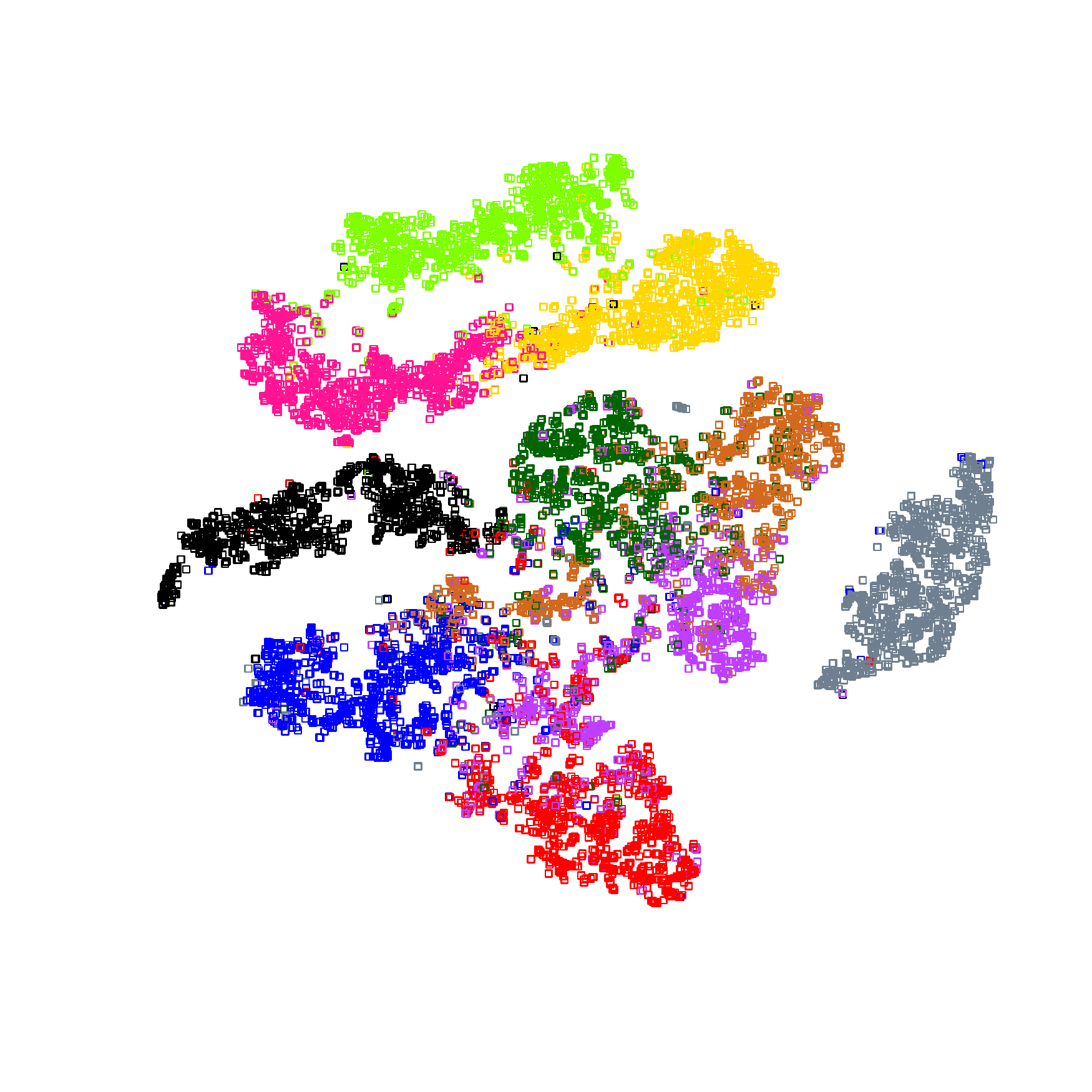}}
\subfigure[Third hidden layer (LRA).]{\label{fig:lra_tsne_z3}\includegraphics[width=48mm]{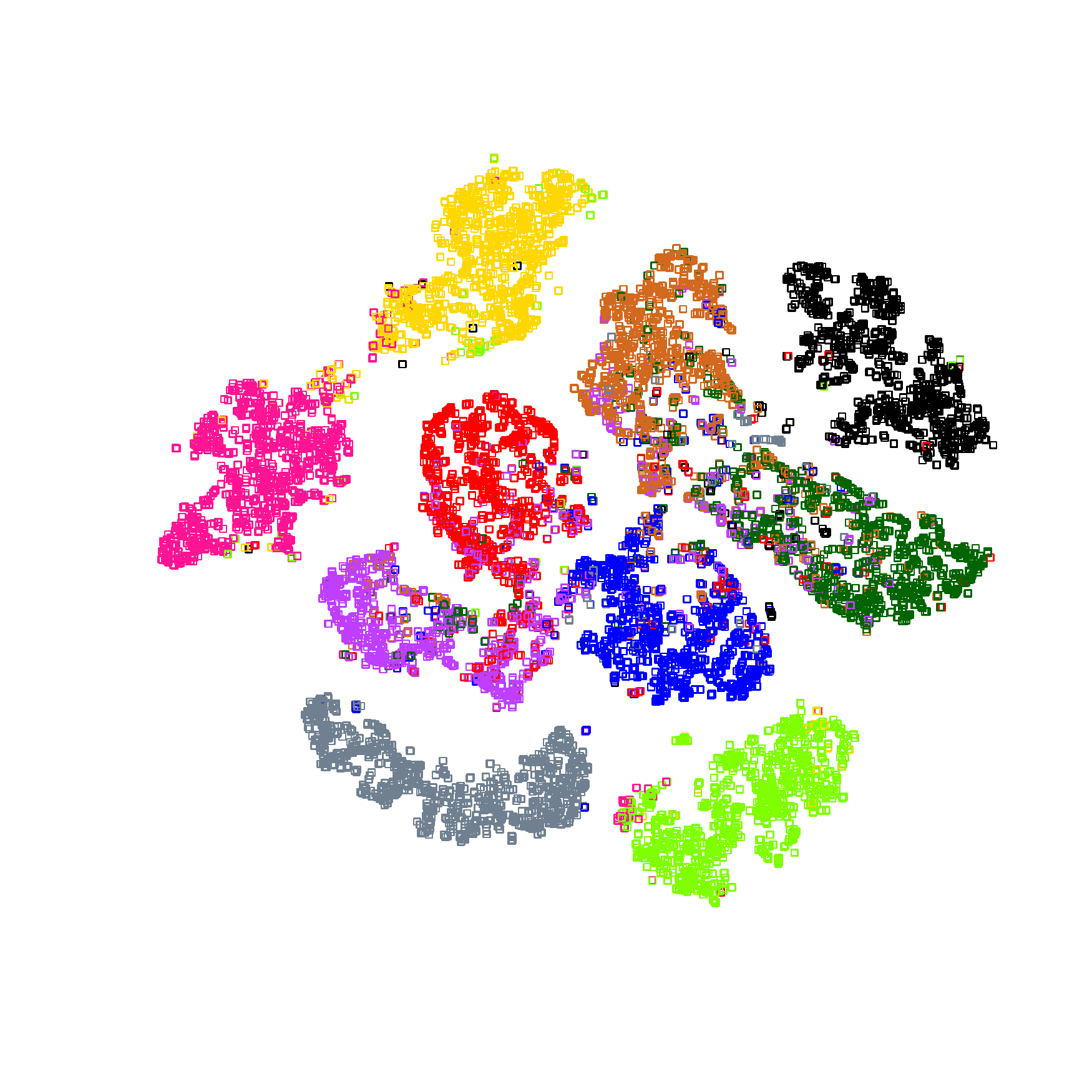}}
\caption{t-SNE plots of the various latent spaces acquired by the sparse rectifier network trained with backprop (top) or LRA-fdbk (bottom) on Fashion MNIST.}
\label{results:lra_tsne}
\end{figure*}

Note that with \emph{LRA-fdbk}, one can easily integrate what we will call \emph{soft} lateral competition instead of the hard vector quantization in LWTA. For example, one can use the softmax instead of the argmax operator for each competition block. This would mean the lateral competition block would be defined as:
\begin{align}
  z^i_j = \frac{\exp(z^i_j)}{\sum^n_{k = 1} \exp(z^i_k)} \mbox{.}
\end{align}
This soft block could then be treated as the probabilities of a mini categorical distribution and sampled accordingly, if a hard-decision is still required. Since \emph{LRA-fdbk} does not require the derivative of the lateral competition block function, one does not need to compute the expensive Jacobian associated with the pure softmax function.\footnote{When the softmax is used at the output layer in purely differentiable systems, one takes advantage of the analytically simplified derivative of the output with respect to pre-activities when using the categorical log likelihood loss function. When using the softmax \emph{inside} the network, this trick is no longer available to the practitioner.} For this experiment, we refer to the proposed soft LWTA as SLWTA.

In Table \ref{results:lra_nets}, we see that \emph{LRA-fdbk} is able to successfully train different activation functions. This includes the deep models that contain discrete-valued units. It is interesting to note, that while all networks trained with \emph{LRA-fdbk} generalize reasonably well, a network that performs best on one dataset does not necessarily perform the best on the other. For example, the best-performing network on MNIST was the network that used the signum function while the sparse rectifier network performed better on Fashion MNIST. To dig deeper into the discriminative ability of each layer of a network learned with \emph{LRA-fdbk} versus other algorithms such as BP and DFA, we extracted the hidden representations of the model learned under each when applied to the test-set of Fashion MNIST. Each multidimensional representation vector was then projected to 2D for visualization using t-SNE, Barnes-Hutt approximation \citep{van2013barnes}. The results of this visualization can be found Figure \ref{results:lra_tsne}. We see, first and foremost, that the model learned with \emph{LRA-fdbk} has acquired distributed representations that contain information useful for properly separating the data-points by class.

\subsection{Stochastic Networks}
We next investigate if LRA, or specifically \emph{LRA-fdbk}, can successfully handle networks with stochastic units, such as Bernoulli-distributed variables, an important class of non-differentiable activation functions. 
We compare LRA with DTP and back-propagation of errors. For back-propagation of errors, since a discrete sampling function is non-differentiable, we explore a variety of approximations that deal with binary stochastic units, which can be considered to be noisy modifications of the logistic sigmoid. To train networks with these kinds of units, we need an estimator, which can be classified under two categories--unbiased and biased. If the expected value of an estimate equals the expectation of the derivative it is trying to estimate, the estimator is unbiased. Otherwise, it is biased. We examine several such estimators, including the straight-through estimator  STE \citep{Bengio13,Chung16}, variations of the slope-annealing trick \citep{Chung16}, and reinforcement learning approaches \citep{Williams1992,Chung16} to training discrete-valued variables, i.e., REINFORCE. REINFORCE operates directly on the loss of the network and does not require back-propagated gradients while the class of STEs simply replace the derivative of the Bernoulli sampling operation with the identity function. The sigmoid-adjusted STE replaces the same derivative with that of the logistic sigmoid. In slope-annealing, we multiply the input value by a slope-value $m$, which is increased throughout training to make the sigmoidal derivative ultimately approach the step function. 

The stochastic models trained for this experiment each contain two layers of 200 hidden units (which is the setting used by \cite{lee2015targetprop}) and parameters are trained over 500 epochs. The specific architecture is as follows:
\begin{align}
\mathbf{h}^1 &= W_1 \mathbf{x},\quad \mathbf{z}^1 = sigm( \mathbf{h}^1 ) \\
\mathbf{h}^2 &= W_2 S(\mathbf{z}^1),\quad \mathbf{z}^2 = sigm( \mathbf{h}^2 ) \\
\mathbf{h}^3 &= W_3 S(\mathbf{z}^2),\quad \mathbf{z}^3 = softmax( \mathbf{h}^3 )
\end{align}
where $sigm(v)$ is the logistic sigmoid, which parametrizes the probability $p$ of the layer of Bernoulli variables, and $S(p) \sim\mathbf{B}(1, p)$ is a stochastic operator that takes in a probability $p$ and returns either a zero or a one, e.g., a binary variable.

\begin{table*}[t]
\begin{center}
\caption{Generalization error (\%) of various stochastic binary networks. Errors are calculated using model posterior probabilities averaged over $M = 100$ samples.}
\label{results:stoch_nets}
\vspace{2mm}
\footnotesize
\begin{tabular}{|l||c||c|}
\hline
  & \textbf{MNIST} & \textbf{Fashion MNIST} \\
  \hline
  ST-BP, Pass-through STE & 85.409 & 59.040 \\
  BP, Sigmoid-Adjusted STE & 96.039 & 85.17 \\
  BP, Slope-Annealing & 97.180 & 84.790 \\
  BP, REINFORCE & 88.811 & 75.160 \\
  BP, REINFORCE (Variance-Adj.) & 94.110 & 81.360 \\
  DTP             & $\mathbf{98.460}$ & 87.320 \\
  LRA-\emph{fdbk} & 98.280 &  $\mathbf{89.210}$ \\
  \hline
\end{tabular}
\end{center}
\end{table*}

In Table \ref{results:stoch_nets}, observe that LRA is able to effectively train networks composed of stochastic binary units, competitive with DTP and outperforming the other estimators used in back-propagation. This is encouraging, since it is well-known that actual neurons communicate via spikes, and modeling this discrete signal as a Bernoulli variable brings us one step closer to incorporating neuro-biological ideas into artificial neural architectures.  We believe that using spike-like variables in a neural system offers a form of regularization much akin to that of drop-out \citep{srivastava2014dropout}. The key feature of using spike variables is that at test-time, we do not ``shut off'' this mechanism as is done in drop-out (where we calculate an expectation over all possible sub-models by multiplying the activities by the drop-out probability used in training).
One could easily use a stochastic model such as the one we train to also characterize its uncertainty at the posterior by simply estimating its variance in addition to the mean as we have done in these particular experiments.



\begin{table*}[t]
\begin{center}
\caption{Finetuning experimental results on the Fashion MNIST dataset. 
For models after pretraining (``pretrained''), we report error at the very end of 100 epochs.
For the ``finetuned'' models, error is reported for parameters with best validation error.}
\label{results:feat_vs_opt}
\vspace{2mm}
\footnotesize
\begin{tabular}{|l||c|c|c|}
\hline
  & \textbf{Train} & \textbf{Valid} & \textbf{Test}\\
  \hline
  LRA, pretrained & 1.49 & 11.02 & 11.42 \\
  LRA, finetuned & 0.13 & 10.11 & 11.32 \\
  \hline
  Backprop, pretrained & 0.11 & 11.00 & 10.61 \\
  Backprop, finetuned & 0.00 & 10.00 & 10.19 \\
  \hline
\end{tabular}
\end{center}
\end{table*}

\subsection{Feature Extraction versus Optimization}
In our final experiment, we investigate if LRA works more like a feature extraction algorithm, which is more useful for pre-training, or as an optimization procedure. We conduct this experiment in two parts across two different researchers. The first part of the experiment entails training two MLPs with 3 hidden layers of 256 units, using hyperbolic tangent activation functions, one with \emph{LRA-fdbk} and one with back-propagation of errors. The parameters, e.g. synaptic weight matrices and bias vectors, are extracted from each model and communicated to a second researcher. The identifiers that indicate which network was trained by which algorithm are removed before communication. The second researcher is to fine-tune both networks using back-propagation of errors and stochastic gradient descent.

\begin{figure*}
\centering     
\includegraphics[width=70mm]{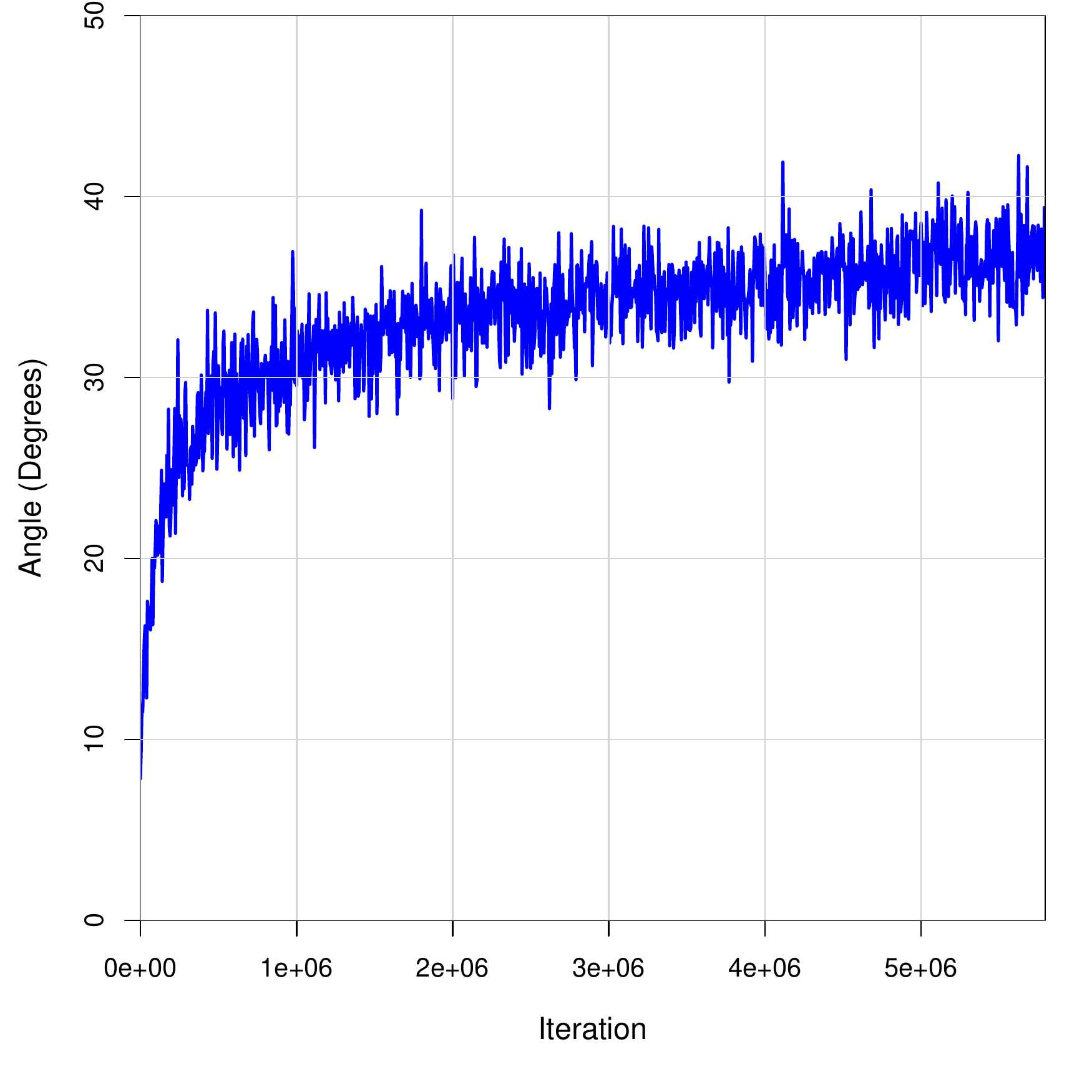}
\caption{Tracking the angle between updates computed by LRA with backprop as a function of iteration, or mini-batch. Every 1000th mini-batch the angle is measured.
}
\label{fig:angles}
\end{figure*}

To simulate pre-training for the first part of the experiment, parameters were optimized using the Adam adaptive learning rate (with a step-size of $0.0002$) and mini-batches of 50 samples. Weights for the models were initialized from a Gaussian distribution $\mathcal{N}(0,0.05)$ and error feedback weight matrices $E_\ell$ (used by LRA-fdbk) were initialized using a zero-mean Gaussian $\mathcal{N}(0,1)$. For fine-tuning, we trained with back-propagation and stochastic gradient descent with a step size of $0.001$ and mini-batches of 50 samples. Both phase models were trained over 100 epochs. 

In Table \ref{results:feat_vs_opt}, we report model performance on each split (training, validation, and test) at the end of pretraining and the best model (as measured on validation) found during fine-tuning.
We see that fine-tuning an LRA-trained network with back-propagation of errors appears to improve performance, especially on the training and validation sets. Due to the small training error (compared to validation and testing) and the similarities in errors between backprop and LRA, it appears that LRA still works like an optimization algorithm but one that is regularized (so it is slower to overfit). The source of this regularization might come from the use of the error feedback weights.


To test this idea further, we conducted another experiment that measured the angle between the parameter updates computed by LRA (collectively denoted by $\Delta_{LRA}$)
after processing a particular mini-batch and the angle of the update as would have been given by backprop, (collectively denoted by $\Delta_{BP}$). The angle is measured by first computing the cosine of the angle between the two different types of updates computed at any iteration $i$, or $cos(\Delta_{LRA},\Delta_{BP} = (\Delta_{LRA})^T \Delta_{BP} / ( ||\Delta_{LRA}||_2 ||\Delta_{BP}||_2)$, and ultimately converting to degrees. As indicated in Figure \ref{fig:angles}, we see that over the course of training (100 epochs), the angle between the updates found by the backprop and LRA are within the desired $90^{\circ}$  (meaning that the LRA update is a descent direction) and while it increases from an initial $10^{\circ}$, it ultimately stabilizes at about a divergence of $38^{\circ}$ on average. 

\section{Related Work}
\label{sec:review}


As suggested by the algorithm's name, LRA approaches the credit assignment problem by explicitly formulating and optimizing the related problem of learning good latent representations, or abstractions, of the data. Specifically, LRA decomposes the problem into a series of linked local learning problems. This form of learning is what we will call ``coordinated local learning''. Like classical local (neurobiologically plausible) rules, such as Hebbian learning \citep{hebb1949organization}, we make use of information that is within close proximity of particular groups of neurons to compute updates to synaptic weights--guessed initial activation patterns and target activation patterns for any particular layer. However, unlike pure local learning, part of the local information LRA uses, i.e., the targets, are created through a process that is guided more globally by either explicit feedback weights or an iterative-inference pathway (implemented by the chain rule of calculus). The motivation behind this particular style of computation that defines LRA comes from the theory of predictive coding, part of which posits that local computations occur at multiple levels of the biological structure underlying the human brain \citep{grossberg1982does,rao1999predictive,huang2011predictive,clark2013whatever,panichello2013predictive}. This stands in contrast to back-propagation of errors, the workhorse algorithm behind modern neural networks, which crucially conducts credit assignment through the use of a global feedback pathway to carry back the error signals needed for computing updates \citep{ororbia2017learning}. This particular pathway creates problems such as exploding and vanishing gradients \citep{bengio1994learning,glorot2010understanding} and imposes severe restrictions on kinds of operations and modifications we can use--highly nonlinear mechanisms such as lateral neuronal competition and non-differentiable operators, such as discrete-valued stochastic activations, are difficult or even impossible to implement.

As we have shown earlier, LRA was created from the perspective of viewing back-propagation of errors from the perspective of target propagation \citep{Bengio14,lee2015targetprop}, of which recirculation \citep{hinton1988learning,o1996biologically} is a predecessor algorithm. In recirculation, a single hidden layer autoencoder uses the datum as the target value for reconstruction (affecting the decoder) and the initial encoded representation of the datum as the target for the encoder, which is computed after a second forward pass. 
Target propagation revolves around the concept of the function inverse--if we had access to the inverse of the network of forward propagations, we could compute which input values at the lower levels of the network would result in better values at the top that would please the global cost. In essence, we would use the inverse to propagate back along the network the target value and then update each layer to move closer to this target value. So long as we have access to the inverse of the functions used for each layer, we can use any non-linear activation, including those that are discrete-valued. Under simple conditions, when all the layer objectives are combined, target propagation could yield updates with the same sign as the updates obtained by back-propagation \citep{le1986learning}.

Like LRA, algorithms like target propagation and recirculation can be viewed as using higher-level objectives that seek better representations of data, governed by the principle of discrepancy reduction \citep{ororbia2017learning} which entails a two-step process for learning: 1) seek better representations of data, 2) minimize the mismatch between the model state and this better state. Furthermore, they represent a strong push towards using local, more biologically plausible, rules to learn neural systems. 

Local learning first made a small resurgence when training deeper networks first came into mainstream view in the form of layer-wise training of unsupervised models \citep{bengio_greedy_2007}, supervised models \citep{lee_deeply-supervised_2014}, and semi-supervised models/hybrid training \citep{ororbia_deep_hybrid_2015a,ororbia2015online}.  Although important in stimulating work towards improved learning and initialization of more complex neural models, the key problem with these layer-wise training approaches was that they were greedy--building a model from the bottom-up, freezing lower-level parameters as higher-level feature detectors were learnt. These approaches lacked the global coordination where upper-layer feature detectors direct lower-layer feature detectors as to what basic patterns they should be finding.\footnote{A lower-level feature detector might be able to find different aspects of structure in its input since multiple patterns might satisfy its layer-wise objective but this might not help the layers above find better higher-level patterns/abstractions.} Nonetheless, interesting local update rules could be used in the construction of these ``stacked'' models--back-propagation on the reconstruction cross entropy for autoencoders \citep{vincent2008extracting} and Contrastive Divergence for Boltzmann-based models \citep{hinton2002training,bengio_greedy_2007}. Another interesting approach, and one related to LRA in that it cares about sparsity of representations, is that of stacked sparse coding \citep{he2014unsupervised}, which greedily learns a composition of sparse coding sub-models \citep{olshausen1997sparse}.

The idea of learning locally is slowly becoming prominent in the training artificial neural networks, with other recent proposals including kickback \citep{balduzzi2015kickback}, which was derived specifically for regression problems. MAC/QP \citep{Miguel14} relaxes the hard constraint that the output of one layer equals the input to the next layer, adding a penalty term to the objective function when they are different. This allows the training of deep networks to be done locally and parallelized. Decoupled neural interfaces \citep{jaderberg2016decoupled} also operate locally, but take the approach of learning a predictive model of error gradients (and inputs) instead of trying to use local information to estimate an update to weights. As a result, this procedure allows layers of the underlying model to be trained independently. Other related approaches, which take a stochastic/probabilistic view of learning, include expectation propagation \citep{jylanki2014expectation}, the variational walkback algorithm \citep{goyal2017variational}, and equilibrium propagation \citep{scellier2017equilibrium}. Contrastive Hebbian learning (CHL) \citep{movellan1991contrastive,xie2003equivalence,o2001generalization} works similarly to Contrastive Divergence in that it is ultimately computing parameter updates using a positive phase and a negative phase, trying to make the negative phase (or the ``fantasies'') more similar to the positive phase (which is the state of the model clamped at the data).


Another important idea that comes into play in LRA is that learning is possible with asymmetry, and even more interestingly, with random fixed feedback loops. This was shown in a learning algorithm called feedback alignment \citep{lillicrap2016random}, which essentially replaces the backward pass of back-propagation that involves the transpose of the feedforward weights with a random matrix of the same dimensions. Direct feedback alignment \citep{nokland2016direct} extends this idea further by directly connecting the output layer's pre-activation derivative to each layer's post-activation. These feedback alignment procedures directly resolve one of the main criticisms of back-propagation--the weight-transport problem \citep{grossberg_resonance_1987,liao2016important}--by showing that coherent learning is possible with asymmetric forward and backward pathways. LRA, however, uses the idea of random error feedback loops quite differently--use the error feedback connections to generate a better target representation to move towards instead of simply replacing the error gradient computations within back-propagation's global feedback pathway. 

\section{Conclusion \& Future Work}
\label{conclusion}
We showed how back-propagation can be re-cast in the framework of target propagation and used the insights from this perspective to propose the Local Representation Alignment (LRA) algorithm. LRA is a training procedure that decomposes the credit assignment problem of artificial neural networks into smaller, local learning problems. Specifically, we introduced the notion of the computation subgraph--an object that encompasses two layers of processing elements and the underlying operations and parameter variables that connect them--and how to view a deep network as a linked set of such subgraphs. Motivated by fundamental ideas in representation learning, LRA structures every subgraph within the network to have a target, not just at the output layer, and adjusts the free parameters of the subgraph to move the output closer to the target. LRA, in contrast to previous approaches including target propagation, chooses targets that are in the possible representation of the associated layers and hence the layer's parameters can be updated more effectively (i.e. layers are not made to match a target that is impossible to achieve). The subgraph view also allowed us to introduce a short-circuit pathway, inspired by the idea of feedback alignment, which allows LRA to handle non-differentiable activation functions.

Unlike previously proposed algorithms, including back-propagation, target propagation, and variants of feedback alignment, LRA is far less sensitive to parameter initialization when training highly nonlinear networks. Furthermore, it 
can adaptively decide the depth of the credit assignment it needs to conduct, which can lead to savings in computation per step. In addition to being compatible with recent innovations such as batch normalization and drop-out, LRA is architecture-agnostic, so long as the global model can be decomposed into a series of linked subgraphs, where the output each subgraph can be viewed as a hidden representation to which a target can be assigned. For the case of feedforward networks, our experiments on MNIST and Fashion MNIST add strong empirical evidence to support the above claims.

LRA offers a pathway for users of neural networks to design the architecture for the problem at hand rather than for the traits and quirks of back-prop-based algorithms. This means that non-differentiable units and more biologically-motivated ones may be utilized.
Since discrepancy reduction \citep{ororbia2017learning} can be viewed as a special variant of LRA and was shown to be capable of successfully learning directed models of temporal data without back-propagation through time, future work will include training recurrent network models, on videos and text documents, with LRA. Furthermore, we intend to examine the algorithm's performance and behavior at a larger scale than that investigated in this paper.

\acks{We would like to acknowledge support for this project from the National Science Foundation (NSF grant \#1317560 ). }


\newpage

\appendix
\section*{Appendix A.}

\begin{figure*}
\centering     
\includegraphics[width=90mm]{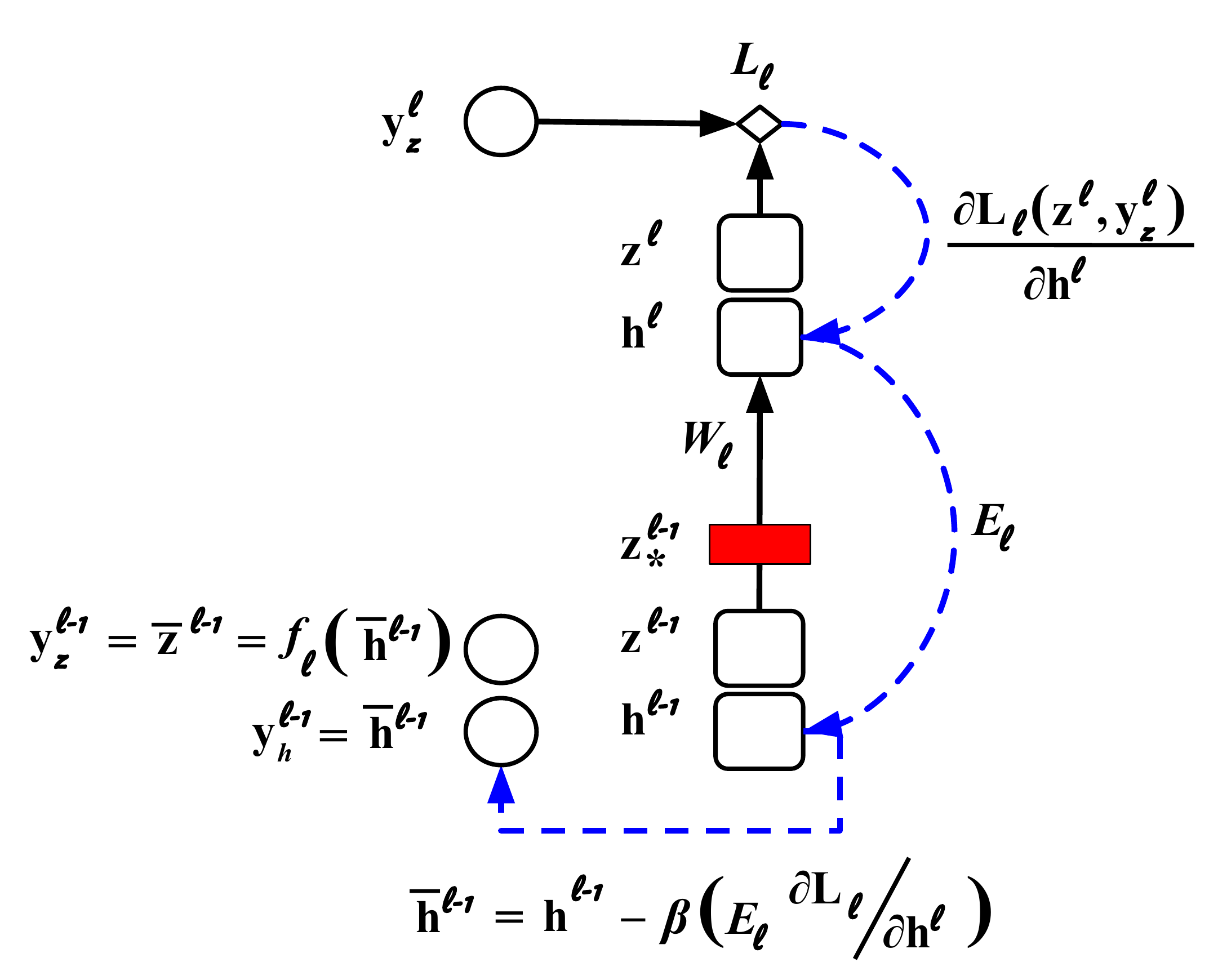}
\caption{The target calculation process for two variants of LRA applied to a differentiable multilayer perceptron (MLP) with a non-differentiable operator (shown in red).
}
\label{fig:lra_nondiff}
\end{figure*}

\begin{algorithm}
   \caption{LRA-fdbk applied to an MLP with non-differentiable components.}
   \label{algo:lra_nondiff}
\begin{algorithmic}
   \STATE {\bfseries Input:} data $(\mathbf{x}, \mathbf{t})$, number steps \emph{K}, halting criterion $\epsilon$, step-size $\beta$, model parameters $\Theta = \{W_1, \mathbf{b}_1, \cdots, W_\ell, \mathbf{b}_\ell, \cdots, W_n, \mathbf{b}_n \}$, and norm constraints $\{ c_1, c_2 \}$
   \STATE {\bfseries Specify:} $\{f_1(\mathbf{h}), \cdots, f_\ell(\mathbf{h}), \cdots, f_n(\mathbf{h}) \}$ and $\{g_1(\mathbf{h}), \cdots, g_\ell(\mathbf{h}), \cdots, g_{n-1}(\mathbf{h}) \}$, \\$\{ \mathcal{L}_1(\mathbf{z}, \mathbf{y}), \cdots, \mathcal{L}_\ell(\mathbf{z}, \mathbf{y}), \cdots, \mathcal{L}_n(\mathbf{z}, \mathbf{y}) \}$, and, optionally, error weights $\{E_1, \cdots, E_\ell, \cdots, E_n \}$
   \STATE \COMMENT $\mathbf{y}^\ell_h$: what we would like input to the activation function at layer $\ell$ to be
   \STATE \COMMENT $\mathbf{y}^\ell_z$: what we would like output of activation function $f_\ell$ at layer $\ell$ to be
   \STATE \COMMENT $\mathbf{h}^\ell$: input to activation function at layer $\ell$ resulting from feed forward phase
      \STATE \COMMENT $\mathbf{z}^\ell$: output of activation function at layer $\ell$ resulting from feed forward phase
      \STATE \COMMENT $\mathbf{\bar{h}}^\ell$ corresponds to $\mathbf{h}^\ell$, $\mathbf{\bar{z}}^\ell$ corresponds to $\mathbf{z}^\ell$, and $\mathbf{\bar{z}}^\ell_{*}$ corresponds to $\mathbf{z}^\ell_{*}$
    \STATE \textbf{Function}\{ComputeUpdateDirection\}\{$(\mathbf{x}, \mathbf{t}),\Theta, K, c_1, c_2, \beta, \epsilon $\}
    \bindent
    \STATE $\mathbf{z}^0 = \mathbf{x} $ \COMMENT Run feedforward pass to get initial layer-wise statistics
	\FOR{$\ell=1$ {\bfseries to} $n$}
    	\bindent
        \STATE $W_\ell, \mathbf{b}_\ell, \leftarrow \Theta$ 
    	\STATE $\mathbf{h}^\ell \leftarrow W_\ell \mathbf{z}^{\ell-1} + \mathbf{b}_\ell $, $\mathbf{y}^\ell_h \leftarrow \mathbf{h}^\ell$, $\mathbf{\bar{h}}^\ell \leftarrow \mathbf{h}^\ell$
    	\STATE $\mathbf{z}^\ell \leftarrow f_\ell( \mathbf{h}^\ell )$, $\mathbf{y}^\ell_z \leftarrow \mathbf{z}^\ell$, $\mathbf{\bar{z}}^\ell \leftarrow \mathbf{z}^\ell$\
        \STATE $\color{blue} \mathbf{z}^\ell_{*} \leftarrow  g_\ell(\mathbf{z}^\ell) \mbox{ if } \ell \neq n \mbox{ and } \mathbf{z}_\ell \mbox{ otherwise}  $
        \eindent
    \ENDFOR
    \STATE $\mathbf{y}^n_z \leftarrow \mathbf{t}$ \COMMENT Override top-level target with correct target from training data
    \STATE $\ell = n$
    \WHILE{$\ell \geq 1$ and $\mathcal{L}_\ell( \mathbf{z}^{\ell} , \mathbf{y}^\ell_z) \geq \epsilon$ }
    	\bindent
          \STATE $W_\ell, \mathbf{b}_\ell, \leftarrow \Theta$

            \STATE \COMMENT Calculate parameter update direction for layer $\ell$, comparing initial guess to target
            \STATE \COMMENT Normalize$(\cdot,\cdot)$ is defined in Eq \ref{eqn:normalize}.
            \STATE $\nabla_{W_\ell} \leftarrow Normalize( \frac{\partial \mathcal{L}_\ell( \mathbf{z}^{\ell}, \mathbf{y}^\ell_z)}{\partial W_\ell }, c_1 )$, $\nabla_{\mathbf{b}_\ell} \leftarrow Normalize( \frac{\partial \mathcal{L}_\ell( \mathbf{z}^\ell , \mathbf{y}^\ell_z)}{\partial \mathbf{b}_\ell }, c_1 ) $
            \STATE \COMMENT Find target for layer $\ell-1$ (Note: could add early stopping criterion)
            \STATE \COMMENT Calculate pre-activation displacement
            \STATE $\Delta_{h^{\ell-1}} \leftarrow E_\ell \frac{\partial \mathcal{L}_\ell(\mathbf{z}^\ell, \mathbf{y}^\ell_z) }{\partial \mathbf{h}^{\ell}} = E_\ell \Big( \frac{\partial \mathcal{L}_\ell(\mathbf{z}^\ell, \mathbf{y}^\ell_z) }{\partial \mathbf{z}^\ell} \otimes \frac{\partial f_\ell(\mathbf{h}^\ell) }{\partial \mathbf{h}^\ell} \Big)$, $\Delta_{h^{\ell-1}} \leftarrow Normalize( \Delta_{h^{\ell-1}}, c_2 ) $ 
            \STATE \COMMENT Recalculate neural activities of subgraph
            \STATE $\mathbf{\bar{h}}^{\ell-1} \leftarrow \mathbf{\bar{h}}^{\ell-1} - \beta \Delta_{h^{\ell-1}}, \quad \mathbf{\bar{z}}^{\ell-1} \leftarrow f_{\ell-1}( \mathbf{\bar{h}}^{\ell-1} )$ 
            \STATE $\mathbf{y}^{\ell-1}_z \leftarrow \mathbf{\bar{z}}^{\ell-1}$ \COMMENT Update variable holding target for subgraph below

        \STATE $\ell = \ell - 1$
        \eindent
        \ENDWHILE
	\STATE \textbf{Return} $\nabla_{\Theta} = \{ \nabla_{W_1}, \nabla_{\mathbf{b}_1}, \cdots, \nabla_{W_\ell}, \nabla_{\mathbf{b}_\ell}, \cdots, \nabla_{W_n}, \nabla_{\mathbf{b}_n} \}$
    \eindent
    \STATE \textbf{EndFunction}
\end{algorithmic}
\end{algorithm}

In Figure \ref{fig:lra_nondiff}, we illustrate a computation subgraph that contains a non-differentiable operation, such as a discrete-valued nonlinearity or a sampling function. LRA avoids the need for approximating the derivative of the non-differentiable function by simply short-circuiting the pathway that gradients would naturally flow through. This is where the wiring of the error weights becomes even more useful--we can ``pocket'' exotic functions right on top of the subgraph's input post-activation. Another way to view this setup is to consider the input post-activation to be composed of two processing steps, an initial nonlinear transformation (such as through the hyperbolic tangent) followed by a discretization step (such as through a signum).

The full process of LRA, including target computation and parameter update calculation, is presented in Algorithm \ref{algo:lra_nondiff}. The primary differences between this non-differentiable variant and the original are emphasized by coloring the pseudocode lines relevant to the non-differentiable operations in blue. In particular, we see that we calculate the post-activation as two pieces--$\mathbf{z}^\ell$ and $\mathbf{z}^\ell_{*}$. $g_\ell(\cdot)$ is used to specifically indicate the discrete-valued activation function. It is also important to note that we do not apply discretization to the topmost post-activation, $\mathbf{z}^n$. Examples of non-differentiable operations can include the Heaviside step function or a Bernoulli sampling operator. However, investigating the integration of other functions where derivative computation is either expensive or impossible is now viable in this particular setup of LRA.

\section*{Appendix B.}
\label{extension:unfolding}
In this appendix we describe how Local Representation Alignment (LRA), specifically, \emph{LRA-fdbk}, can be applied to the training of a recurrent neural network (RNN). This variation of LRA aligns with the extension of back-propagation of errors to sequential neural models, or back-propagation through time \citep{werbos1988generalization}. 

Essentially, we apply LRA of errors but with one crucial exception--we must unfold the network $T$ steps back in time\footnote{Note that this action is the same as unrolling a mathematical recurrence relation, since the hidden state of an RNN is recursively computed.}, ideally from the end of the sample sequence back up to its beginning. Note that in practice, we break up our sequences into sub-sequences of length $K$, where $K < T$, and process time-varying data in chunks to maintain computational tractability. This creates a very deep feedforward network, with each input at each time step fed into the unfolded graph and the underlying parameters copied at each time step. 

We will focus on applying LRA to the situation where an RNN is fully unfolded over the length of an entire sample sequence. Assume a simple single-hidden layer Elman RNN with a linear output layer, defined as follows:
\begin{align}
\mathbf{h}^1_t = W \mathbf{x}_t + V \mathbf{z}^1_{t-1},\quad \mathbf{z}^1_t = \phi(\mathbf{h}^1_t),\quad \mathbf{z}^2_t = \mathbf{h}^2_t = U \mathbf{z}^1_t \label{eqn:rnn_simple}
\end{align}
where the parameters to learn are simply $\Theta = \{W,V,U\}$ (biases omitted). The task is next-step prediction, so at each time step, $\mathbf{y}_t = \mathbf{y}^2_{z,t} = \mathbf{x}_{t+1}$, where $t$ indexes a particular point in time.

Next, we define the variable $\mathbf{e}$ to be the first derivative of a given local loss, where $\mathbf{e}^{2}_t$ is the first partial derivative of the local loss with respect to the output units $\mathbf{z}^2_t$ at time $t$ and $\mathbf{e}^{1}_t$ is the first partial derivative of the local loss with respect to the hidden unit post-activation $\mathbf{z}^1_t$. In short, assuming a Gaussian loss for both output and hidden local losses $\mathcal{L}_2$ and $\mathcal{L}_1$ with a fixed variance of 1, this means that:
\begin{align}
\mathbf{e}^{2}_t &= \mathbf{e}^{2}_t(\mathbf{y}^{2}_{z,t},\mathbf{z}^{2}_t) = \frac{\partial \mathcal{L}_{2}(\mathbf{y}^{2}_{z,t},\mathbf{z}^{2}_t)}{\partial \mathbf{z}^{2}_t} = -(\mathbf{y}^{2}_{z,t} - \mathbf{z}^{2}_t) \\
\mathbf{e}^{1}_t &= \mathbf{e}^{1}_t(\mathbf{y}^{1}_{z,t},\mathbf{z}^{1}_t) = \frac{\partial \mathcal{L}_{1}(\mathbf{y}^{1}_{z,t},\mathbf{z}^{1}_t)}{\partial \mathbf{z}^{1}_t} = -(\mathbf{y}^{1}_{z,t} - \mathbf{z}^{1}_t) \mbox{.}
\end{align}
Finally, a single, extra set of recurrent parameters $E$ will transmit the error from the output units back to the hidden units. 

To train an RNN over $T$ steps in time, we simply unfold the network as in back-propagation through time but copy the error units and error weights $T$ times as well. Finding the targets for the hidden layers of this unfolded RNN would then amount to:
\begin{align}
\mathbf{y}^{1}_{z,T} = \phi( \mathbf{h}^1_T - \beta (E \mathbf{e}^{2}_T) ), \cdots, \mathbf{y}^{1}_{z,t} = \phi( \mathbf{h}^1_t - \beta (E \mathbf{e}^{2}_t) ), \cdots,\mathbf{y}^{1}_{z,1} = \phi( \mathbf{h}^1_1 - \beta (E \mathbf{e}^{2}_1) )
\end{align}
noting that $\mathbf{e}^2_t$ can be readily computed since the target for the output units is the data point at the next time step $t+1$, in other words $\mathbf{y}^{1}_{z,t} = \mathbf{x}_{t+1}$. 
Once targets for each $\mathbf{z}^2_t$ and $\mathbf{z}^1_t$ have been found, the updates to the parameters of this model are then calculated as follows:
\begin{align}
\Delta U = \sum^T_{t=1} \mathbf{e}^{2}_t (\mathbf{z}^1_t)^T,\quad  
\Delta V = \sum^T_{t=1} (\mathbf{e}^{1}_t \otimes \phi^{\prime}(\mathbf{h}^1_t) ) (\mathbf{z}^1_{t-1})^T,\quad 
\Delta W = \sum^T_{t=1} (\mathbf{e}^{1}_t \otimes \phi^{\prime}(\mathbf{h}^1_t) ) (\mathbf{x}_t)^T
\end{align}
where $\mathbf{z}^1_0 = 0$, or the null state. The update to $U$ is written in simplified form since the output nonlinearity's first derivative is the identity.

\vskip 0.2in
\bibliography{ref}

\begin{thebibliography}{61}
\providecommand{\natexlab}[1]{#1}
\providecommand{\url}[1]{\texttt{#1}}
\expandafter\ifx\csname urlstyle\endcsname\relax
  \providecommand{\doi}[1]{doi: #1}\else
  \providecommand{\doi}{doi: \begingroup \urlstyle{rm}\Url}\fi

\bibitem[Alias Parth~Goyal et~al.(2017)Alias Parth~Goyal, Ke, Ganguli, and
  Bengio]{goyal2017variational}
Anirudh~Goyal Alias Parth~Goyal, Nan Ke, Surya Ganguli, and Yoshua Bengio.
\newblock Variational walkback: Learning a transition operator as a stochastic
  recurrent net.
\newblock In I.~Guyon, U.~V. Luxburg, S.~Bengio, H.~Wallach, R.~Fergus,
  S.~Vishwanathan, and R.~Garnett, editors, \emph{Advances in Neural
  Information Processing Systems 30}, pages 4392--4402. Curran Associates,
  Inc., 2017.

\bibitem[Andersen et~al.(1969)Andersen, Gross, Lomo, and
  Sveen]{andersen1969participation}
P~Andersen, Gary~N Gross, T~Lomo, and Ola Sveen.
\newblock Participation of inhibitory and excitatory interneurones in the
  control of hippocampal cortical output.
\newblock In \emph{UCLA forum in medical sciences}, volume~11, page 415, 1969.

\bibitem[Balduzzi et~al.(2015)Balduzzi, Vanchinathan, and
  Buhmann]{balduzzi2015kickback}
David Balduzzi, Hastagiri Vanchinathan, and Joachim~M Buhmann.
\newblock Kickback cuts backprop's red-tape: Biologically plausible credit
  assignment in neural networks.
\newblock In \emph{AAAI}, pages 485--491, 2015.

\bibitem[Bengio(2014)]{Bengio14}
Yoshua Bengio.
\newblock How auto-encoders could provide credit assignment in deep networks
  via target propagation.
\newblock \emph{CoRR}, abs/1407.7906, 2014.
\newblock URL \url{http://arxiv.org/abs/1407.7906}.

\bibitem[Bengio et~al.(1994)Bengio, Simard, and Frasconi]{bengio1994learning}
Yoshua Bengio, Patrice Simard, and Paolo Frasconi.
\newblock Learning long-term dependencies with gradient descent is difficult.
\newblock \emph{IEEE transactions on neural networks}, 5\penalty0 (2):\penalty0
  157--166, 1994.

\bibitem[Bengio et~al.(2007)Bengio, Lamblin, Popovici, Larochelle,
  et~al.]{bengio_greedy_2007}
Yoshua Bengio, Pascal Lamblin, Dan Popovici, Hugo Larochelle, et~al.
\newblock Greedy layer-wise training of deep networks.
\newblock \emph{Advances in neural information processing systems},
  19:\penalty0 153, 2007.

\bibitem[Bengio et~al.(2013)Bengio, L{\'{e}}onard, and Courville]{Bengio13}
Yoshua Bengio, Nicholas L{\'{e}}onard, and Aaron~C. Courville.
\newblock Estimating or propagating gradients through stochastic neurons for
  conditional computation.
\newblock \emph{CoRR}, abs/1308.3432, 2013.
\newblock URL \url{http://arxiv.org/abs/1308.3432}.

\bibitem[Carreira{-}Perpi{\~{n}}{\'{a}}n and Wang(2012)]{Miguel14}
Miguel~{\'{A}}. Carreira{-}Perpi{\~{n}}{\'{a}}n and Weiran Wang.
\newblock Distributed optimization of deeply nested systems.
\newblock \emph{CoRR}, abs/1212.5921, 2012.
\newblock URL \url{http://arxiv.org/abs/1212.5921}.

\bibitem[Chung et~al.(2016)Chung, Ahn, and Bengio]{Chung16}
Junyoung Chung, Sungjin Ahn, and Yoshua Bengio.
\newblock Hierarchical multiscale recurrent neural networks.
\newblock \emph{CoRR}, abs/1609.01704, 2016.
\newblock URL \url{http://arxiv.org/abs/1609.01704}.

\bibitem[Clark(2013)]{clark2013whatever}
Andy Clark.
\newblock Whatever next? predictive brains, situated agents, and the future of
  cognitive science.
\newblock \emph{Behavioral and Brain Sciences}, 36\penalty0 (3):\penalty0
  181--204, 2013.

\bibitem[Eccles(2013)]{eccles2013cerebellum}
John~C Eccles.
\newblock \emph{The cerebellum as a neuronal machine}.
\newblock Springer Science \& Business Media, 2013.

\bibitem[Erhan et~al.(2010)Erhan, Courville, and
  Bengio]{erhan2010understanding}
Dumitru Erhan, Aaron Courville, and Yoshua Bengio.
\newblock Understanding representations learned in deep architectures.
\newblock \emph{Department d’Informatique et Recherche Operationnelle,
  University of Montreal, QC, Canada, Tech. Rep}, 1355, 2010.

\bibitem[Glorot and Bengio(2010)]{glorot2010understanding}
Xavier Glorot and Yoshua Bengio.
\newblock Understanding the difficulty of training deep feedforward neural
  networks.
\newblock In \emph{Proceedings of the Thirteenth International Conference on
  Artificial Intelligence and Statistics}, pages 249--256, 2010.

\bibitem[Glorot et~al.(2011)Glorot, Bordes, and Bengio]{glorot2011deep}
Xavier Glorot, Antoine Bordes, and Yoshua Bengio.
\newblock Deep sparse rectifier neural networks.
\newblock In \emph{Proceedings of the Fourteenth International Conference on
  Artificial Intelligence and Statistics}, pages 315--323, 2011.

\bibitem[Grossberg(1982)]{grossberg1982does}
Stephen Grossberg.
\newblock How does a brain build a cognitive code?
\newblock In \emph{Studies of mind and brain}, pages 1--52. Springer, 1982.

\bibitem[Grossberg(1987)]{grossberg_resonance_1987}
Stephen Grossberg.
\newblock Competitive learning: From interactive activation to adaptive
  resonance.
\newblock \emph{Cognitive Science}, 11\penalty0 (1):\penalty0 23 -- 63, 1987.
\newblock ISSN 0364-0213.
\newblock \doi{https://doi.org/10.1016/S0364-0213(87)80025-3}.
\newblock URL
  \url{http://www.sciencedirect.com/science/article/pii/S0364021387800253}.

\bibitem[Gulcehre et~al.(2016)Gulcehre, Moczulski, Denil, and
  Bengio]{gulcehre2016noisy}
Caglar Gulcehre, Marcin Moczulski, Misha Denil, and Yoshua Bengio.
\newblock Noisy activation functions.
\newblock In \emph{International Conference on Machine Learning}, pages
  3059--3068, 2016.

\bibitem[He et~al.(2015)He, Zhang, Ren, and Sun]{he2015delving}
Kaiming He, Xiangyu Zhang, Shaoqing Ren, and Jian Sun.
\newblock Delving deep into rectifiers: Surpassing human-level performance on
  imagenet classification.
\newblock In \emph{Proceedings of the 2015 IEEE International Conference on
  Computer Vision (ICCV)}, 2015.

\bibitem[He et~al.(2014)He, Kavukcuoglu, Wang, Szlam, and
  Qi]{he2014unsupervised}
Yunlong He, Koray Kavukcuoglu, Yun Wang, Arthur Szlam, and Yanjun Qi.
\newblock Unsupervised feature learning by deep sparse coding.
\newblock In \emph{Proceedings of the 2014 SIAM International Conference on
  Data Mining}, pages 902--910. SIAM, 2014.

\bibitem[Hebb(1949)]{hebb1949organization}
Donald~Olding Hebb.
\newblock The organization of behavior; a neuropsycholocigal theory.
\newblock \emph{A Wiley Book in Clinical Psychology.}, pages 62--78, 1949.

\bibitem[Hinton(2002)]{hinton2002training}
Geoffrey~E Hinton.
\newblock Training products of experts by minimizing contrastive divergence.
\newblock \emph{Neural computation}, 14\penalty0 (8):\penalty0 1771--1800,
  2002.

\bibitem[Hinton and McClelland(1988)]{hinton1988learning}
Geoffrey~E Hinton and James~L McClelland.
\newblock Learning representations by recirculation.
\newblock In \emph{Neural information processing systems}, pages 358--366,
  1988.

\bibitem[Huang and Rao(2011)]{huang2011predictive}
Yanping Huang and Rajesh~PN Rao.
\newblock Predictive coding.
\newblock \emph{Wiley Interdisciplinary Reviews: Cognitive Science}, 2\penalty0
  (5):\penalty0 580--593, 2011.

\bibitem[Ioffe and Szegedy(2015)]{ioffe2015batch}
Sergey Ioffe and Christian Szegedy.
\newblock Batch normalization: Accelerating deep network training by reducing
  internal covariate shift.
\newblock In \emph{International conference on machine learning}, pages
  448--456, 2015.

\bibitem[Jaderberg et~al.(2016)Jaderberg, Czarnecki, Osindero, Vinyals, Graves,
  and Kavukcuoglu]{jaderberg2016decoupled}
Max Jaderberg, Wojciech~Marian Czarnecki, Simon Osindero, Oriol Vinyals, Alex
  Graves, and Koray Kavukcuoglu.
\newblock Decoupled neural interfaces using synthetic gradients.
\newblock \emph{arXiv preprint arXiv:1608.05343}, 2016.

\bibitem[Jyl{\"a}nki et~al.(2014)Jyl{\"a}nki, Nummenmaa, and
  Vehtari]{jylanki2014expectation}
Pasi Jyl{\"a}nki, Aapo Nummenmaa, and Aki Vehtari.
\newblock Expectation propagation for neural networks with sparsity-promoting
  priors.
\newblock \emph{The Journal of Machine Learning Research}, 15\penalty0
  (1):\penalty0 1849--1901, 2014.

\bibitem[Kingma and Ba(2014)]{kingma2014adam}
Diederik~P Kingma and Jimmy Ba.
\newblock Adam: A method for stochastic optimization.
\newblock \emph{arXiv preprint arXiv:1412.6980}, 2014.

\bibitem[Le~Cun(1986)]{le1986learning}
Yann Le~Cun.
\newblock Learning process in an asymmetric threshold network.
\newblock In \emph{Disordered systems and biological organization}, pages
  233--240. Springer, 1986.

\bibitem[LeCun et~al.(1998{\natexlab{a}})LeCun, Bottou, Bengio, and
  Haffner]{lecun1998gradient}
Yann LeCun, L{\'e}on Bottou, Yoshua Bengio, and Patrick Haffner.
\newblock Gradient-based learning applied to document recognition.
\newblock \emph{Proceedings of the IEEE}, 86\penalty0 (11):\penalty0
  2278--2324, 1998{\natexlab{a}}.

\bibitem[LeCun et~al.(1998{\natexlab{b}})LeCun, Bottou, Orr, and
  M{\"u}ller]{lecun1998efficient}
Yann LeCun, L{\'e}on Bottou, Genevieve~B Orr, and Klaus-Robert M{\"u}ller.
\newblock Efficient backprop.
\newblock In \emph{Neural networks: Tricks of the trade}, pages 9--50.
  Springer, 1998{\natexlab{b}}.

\bibitem[Lee et~al.(2014)Lee, Xie, Gallagher, Zhang, and
  Tu]{lee_deeply-supervised_2014}
Chen-Yu Lee, Saining Xie, Patrick Gallagher, Zhengyou Zhang, and Zhuowen Tu.
\newblock {Deeply-Supervised Nets}.
\newblock \emph{{arXiv}:1409.5185 [cs, stat]}, 2014.

\bibitem[Lee et~al.(2015)Lee, Zhang, Fischer, and Bengio]{lee2015targetprop}
Dong-Hyun Lee, Saizheng Zhang, Asja Fischer, and Yoshua Bengio.
\newblock Difference target propagation.
\newblock In \emph{Proceedings of the 2015th European Conference on Machine
  Learning and Knowledge Discovery in Databases - Volume Part I}, ECMLPKDD'15,
  pages 498--515, Switzerland, 2015. Springer.
\newblock ISBN 978-3-319-23527-1.
\newblock \doi{10.1007/978-3-319-23528-8_31}.
\newblock URL \url{https://doi.org/10.1007/978-3-319-23528-8_31}.

\bibitem[Liao et~al.(2016)Liao, Leibo, and Poggio]{liao2016important}
Qianli Liao, Joel~Z Leibo, and Tomaso~A Poggio.
\newblock How important is weight symmetry in backpropagation?
\newblock In \emph{AAAI}, pages 1837--1844, 2016.

\bibitem[Lillicrap et~al.(2016)Lillicrap, Cownden, Tweed, and
  Akerman]{lillicrap2016random}
Timothy~P Lillicrap, Daniel Cownden, Douglas~B Tweed, and Colin~J Akerman.
\newblock Random synaptic feedback weights support error backpropagation for
  deep learning.
\newblock \emph{Nature communications}, 7:\penalty0 13276, 2016.

\bibitem[Mishkin and Matas(2015)]{MishkinM15}
Dmytro Mishkin and Jiri Matas.
\newblock All you need is a good init.
\newblock \emph{CoRR}, abs/1511.06422, 2015.
\newblock URL \url{http://arxiv.org/abs/1511.06422}.

\bibitem[Movellan(1991)]{movellan1991contrastive}
Javier~R Movellan.
\newblock Contrastive hebbian learning in the continuous hopfield model.
\newblock In \emph{Connectionist Models}, pages 10--17. Elsevier, 1991.

\bibitem[N{\o}kland(2016)]{nokland2016direct}
Arild N{\o}kland.
\newblock Direct feedback alignment provides learning in deep neural networks.
\newblock In \emph{Advances in Neural Information Processing Systems}, pages
  1037--1045, 2016.

\bibitem[Olshausen and Field(1997)]{olshausen1997sparse}
Bruno~A Olshausen and David~J Field.
\newblock Sparse coding with an overcomplete basis set: A strategy employed by
  v1?
\newblock \emph{Vision research}, 37\penalty0 (23):\penalty0 3311--3325, 1997.

\bibitem[O'Reilly(1996)]{o1996biologically}
Randall~C O'Reilly.
\newblock Biologically plausible error-driven learning using local activation
  differences: The generalized recirculation algorithm.
\newblock \emph{Neural computation}, 8\penalty0 (5):\penalty0 895--938, 1996.

\bibitem[O'reilly(2001)]{o2001generalization}
Randall~C O'reilly.
\newblock Generalization in interactive networks: The benefits of inhibitory
  competition and hebbian learning.
\newblock \emph{Neural computation}, 13\penalty0 (6):\penalty0 1199--1241,
  2001.

\bibitem[Ororbia~II et~al.(2015{\natexlab{a}})Ororbia~II, Giles, and
  Reitter]{ororbia2015online}
Alexander~G. Ororbia~II, C.~Lee Giles, and David Reitter.
\newblock Online semi-supervised learning with deep hybrid boltzmann machines
  and denoising autoencoders.
\newblock \emph{arXiv preprint arXiv:1511.06964}, 2015{\natexlab{a}}.

\bibitem[Ororbia~II et~al.(2015{\natexlab{b}})Ororbia~II, Reitter, Wu, and
  Giles]{ororbia_deep_hybrid_2015a}
Alexander~G. Ororbia~II, David Reitter, Jian Wu, and C.~Lee Giles.
\newblock Online learning of deep hybrid architectures for semi-supervised
  categorization.
\newblock In \emph{Machine {Learning} and {Knowledge} {Discovery} in
  {Databases} ({Proceedings}, {ECML} {PKDD} 2015)}, volume 9284 of
  \emph{Lecture {Notes} in {Computer} {Science}}, pages 516--532. Springer,
  Porto, Portugal, 2015{\natexlab{b}}.

\bibitem[Ororbia~II et~al.(2017{\natexlab{a}})Ororbia~II, Haffner, Reitter, and
  Giles]{ororbia2017learning}
Alexander~G. Ororbia~II, Patrick Haffner, David Reitter, and C~Lee Giles.
\newblock Learning to adapt by minimizing discrepancy.
\newblock \emph{arXiv preprint arXiv:1711.11542}, 2017{\natexlab{a}}.

\bibitem[Ororbia~II et~al.(2017{\natexlab{b}})Ororbia~II, Kifer, and
  Giles]{ororbia2017unifying}
Alexander~G Ororbia~II, Daniel Kifer, and C~Lee Giles.
\newblock Unifying adversarial training algorithms with data gradient
  regularization.
\newblock \emph{Neural computation}, 29\penalty0 (4):\penalty0 867--887,
  2017{\natexlab{b}}.

\bibitem[Panichello et~al.(2013)Panichello, Cheung, and
  Bar]{panichello2013predictive}
Matthew Panichello, Olivia Cheung, and Moshe Bar.
\newblock Predictive feedback and conscious visual experience.
\newblock \emph{Frontiers in Psychology}, 3:\penalty0 620, 2013.
\newblock ISSN 1664-1078.
\newblock \doi{10.3389/fpsyg.2012.00620}.
\newblock URL
  \url{https://www.frontiersin.org/article/10.3389/fpsyg.2012.00620}.

\bibitem[Pascanu et~al.(2013)Pascanu, Mikolov, and
  Bengio]{pascanu2013difficulty}
Razvan Pascanu, Tomas Mikolov, and Yoshua Bengio.
\newblock On the difficulty of training recurrent neural networks.
\newblock In \emph{International Conference on Machine Learning}, pages
  1310--1318, 2013.

\bibitem[Rao and Ballard(1999)]{rao1999predictive}
Rajesh~PN Rao and Dana~H Ballard.
\newblock Predictive coding in the visual cortex: a functional interpretation
  of some extra-classical receptive-field effects.
\newblock \emph{Nature neuroscience}, 2\penalty0 (1), 1999.

\bibitem[Romero et~al.(2014)Romero, Ballas, Kahou, Chassang, Gatta, and
  Bengio]{romero2014fitnets}
Adriana Romero, Nicolas Ballas, Samira~Ebrahimi Kahou, Antoine Chassang, Carlo
  Gatta, and Yoshua Bengio.
\newblock Fitnets: Hints for thin deep nets.
\newblock \emph{arXiv preprint arXiv:1412.6550}, 2014.

\bibitem[Rumelhart et~al.(1988)Rumelhart, Hinton, and
  Williams]{rumelhart1988backprop}
David~E. Rumelhart, Geoffrey~E. Hinton, and Ronald~J. Williams.
\newblock Neurocomputing: Foundations of research.
\newblock chapter Learning Representations by Back-propagating Errors, pages
  696--699. MIT Press, Cambridge, MA, USA, 1988.
\newblock ISBN 0-262-01097-6.
\newblock URL \url{http://dl.acm.org/citation.cfm?id=65669.104451}.

\bibitem[Scellier and Bengio(2017)]{scellier2017equilibrium}
Benjamin Scellier and Yoshua Bengio.
\newblock Equilibrium propagation: Bridging the gap between energy-based models
  and backpropagation.
\newblock \emph{Frontiers in computational neuroscience}, 11:\penalty0 24,
  2017.

\bibitem[Srivastava et~al.(2014)Srivastava, Hinton, Krizhevsky, Sutskever, and
  Salakhutdinov]{srivastava2014dropout}
Nitish Srivastava, Geoffrey Hinton, Alex Krizhevsky, Ilya Sutskever, and Ruslan
  Salakhutdinov.
\newblock Dropout: A simple way to prevent neural networks from overfitting.
\newblock \emph{The Journal of Machine Learning Research}, 15\penalty0
  (1):\penalty0 1929--1958, 2014.

\bibitem[Srivastava et~al.(2013)Srivastava, Masci, Kazerounian, Gomez, and
  Schmidhuber]{srivastava2013compete}
Rupesh~K Srivastava, Jonathan Masci, Sohrob Kazerounian, Faustino Gomez, and
  J\"{u}rgen Schmidhuber.
\newblock Compete to compute.
\newblock In C.~J.~C. Burges, L.~Bottou, M.~Welling, Z.~Ghahramani, and K.~Q.
  Weinberger, editors, \emph{Advances in Neural Information Processing Systems
  26}, pages 2310--2318. Curran Associates, Inc., 2013.
\newblock URL \url{http://papers.nips.cc/paper/5059-compete-to-compute.pdf}.

\bibitem[Stefanis(1969)]{stefanis1969interneuronal}
Costas Stefanis.
\newblock Interneuronal mechanisms in the cortex.
\newblock In \emph{UCLA forum in medical sciences}, volume~11, page 497, 1969.

\bibitem[Sussillo(2014)]{Sussillo14}
David Sussillo.
\newblock Random walks: Training very deep nonlinear feed-forward networks with
  smart initialization.
\newblock \emph{CoRR}, abs/1412.6558, 2014.
\newblock URL \url{http://arxiv.org/abs/1412.6558}.

\bibitem[Szegedy et~al.(2013)Szegedy, Zaremba, Sutskever, Bruna, Erhan,
  Goodfellow, and Fergus]{szegedy2013intriguing}
Christian Szegedy, Wojciech Zaremba, Ilya Sutskever, Joan Bruna, Dumitru Erhan,
  Ian Goodfellow, and Rob Fergus.
\newblock Intriguing properties of neural networks.
\newblock \emph{arXiv preprint arXiv:1312.6199}, 2013.

\bibitem[Van Der~Maaten(2013)]{van2013barnes}
Laurens Van Der~Maaten.
\newblock Barnes-hut-sne.
\newblock \emph{arXiv preprint arXiv:1301.3342}, 2013.

\bibitem[Vincent et~al.(2008)Vincent, Larochelle, Bengio, and
  Manzagol]{vincent2008extracting}
Pascal Vincent, Hugo Larochelle, Yoshua Bengio, and Pierre-Antoine Manzagol.
\newblock Extracting and composing robust features with denoising autoencoders.
\newblock In \emph{Proceedings of the 25th international conference on Machine
  learning}, pages 1096--1103. ACM, 2008.

\bibitem[Werbos(1988)]{werbos1988generalization}
Paul~J Werbos.
\newblock Generalization of backpropagation with application to a recurrent gas
  market model.
\newblock \emph{Neural networks}, 1\penalty0 (4):\penalty0 339--356, 1988.

\bibitem[Williams(1992)]{Williams1992}
Ronald~J. Williams.
\newblock Simple statistical gradient-following algorithms for connectionist
  reinforcement learning.
\newblock \emph{Machine Learning}, 8\penalty0 (3):\penalty0 229--256, May 1992.
\newblock ISSN 1573-0565.
\newblock \doi{10.1007/BF00992696}.
\newblock URL \url{https://doi.org/10.1007/BF00992696}.

\bibitem[Xiao et~al.(2017)Xiao, Rasul, and Vollgraf]{xiao2017fashion}
Han Xiao, Kashif Rasul, and Roland Vollgraf.
\newblock Fashion-mnist: a novel image dataset for benchmarking machine
  learning algorithms.
\newblock \emph{arXiv preprint arXiv:1708.07747}, 2017.

\bibitem[Xie and Seung(2003)]{xie2003equivalence}
Xiaohui Xie and H~Sebastian Seung.
\newblock Equivalence of backpropagation and contrastive hebbian learning in a
  layered network.
\newblock \emph{Neural computation}, 15\penalty0 (2):\penalty0 441--454, 2003.

\end{thebibliography}

\end{document}